\newtheorem{theorem}{Theorem}
\newtheorem{definition}{Definition}
\newtheorem{proposition}{Proposition}
\title{Latent Graph Inference with Limited Supervision}
\author{%
	Jianglin~Lu$^{1}$\thanks{Corresponding author: \href{JianglinLu@outlook.com}{\texttt{JianglinLu@outlook.com}}.}\ \ \ \ 
	Yi Xu$^{1}$\ \ \ 
	Huan Wang$^{1}$\ \ \
	Yue Bai$^{1}$\ \ \ 
	Yun Fu$^{1,2}$\\
	$^{1}$Department of Electrical and Computer Engineering, Northeastern University\\
	$^{2}$Khoury College of Computer Science, Northeastern University\\
    \href{https://jianglin954.github.io/LGI-LS/}{Project Page: https://jianglin954.github.io/LGI-LS/}
}
\begin{document}

\maketitle

\begin{abstract}
Latent graph inference (LGI) aims to jointly learn the underlying graph structure and node representations from data features. 
However, existing LGI methods commonly suffer from the issue of \textit{supervision starvation}, where massive edge weights are learned without semantic supervision and do not contribute to the training loss.
Consequently, these supervision-starved weights, which may determine the predictions of testing samples, cannot be semantically optimal, resulting in poor generalization.  
In this paper, we observe that this issue is actually caused by the graph sparsification operation, which severely destroys the important connections established between pivotal nodes and labeled ones.
To address this, we propose to \textit{restore the corrupted affinities and replenish the missed supervision for better LGI}.
The key challenge then lies in identifying the critical nodes and recovering the corrupted affinities.
We begin by defining the pivotal nodes as \textit{$k$-hop starved nodes}, which can be identified based on a given adjacency matrix.
Considering the high computational burden, we further present a more efficient alternative inspired by \textit{CUR matrix decomposition}. 
Subsequently, we eliminate the starved nodes by reconstructing the destroyed connections.
Extensive experiments on representative benchmarks demonstrate that reducing the starved nodes consistently improves the performance of state-of-the-art LGI methods, especially under extremely limited supervision ({6.12\%} improvement on Pubmed with a labeling rate of only {0.3\%}). 

\end{abstract}

\section{Introduction}

Graph neural networks (GNNs)~\cite{Errica2020A, GraphSAGE, GNNAR, ENGNN} have recently received considerable attention due to their strong ability to handle complex graph-structured data. 
GNNs consider each data sample as a node and model the affinities between nodes as the weights of edges. 
The edges as a whole constitute the graph structure or topology of the data. 
By integrating the graph topology into the training process of representation learning, GNNs have achieved remarkable performance across a wide range of tasks, such as classification~\cite{kipficlr, GAT}, clustering~\cite{wang2022adanets, MVCGC}, retrieval~\cite{yudongchen, GCNH19}, and recognition~\cite{xu2022adaptive, Shi2019CVPR}.

Although effective, existing GNNs typically require a prior graph to learn node representations, which poses a major challenge when encountering incomplete or even missing graphs. 
This limitation has spurred the development of latent graph inference (LGI)~\cite{DGM, borde2023latent, Li_Wang2018, SLAPS, LLGT}, also known as graph structure learning~\cite{wang2022robust, LiuCSJ22, fatemi2023ugsl, nodeformer}.
LGI aims to jointly learn the underlying graph and discriminative node representations solely from the features of nodes in an end-to-end fashion.
By adaptively learning the graph topology, LGI models are empowered with great ability to remove noise and capture more complex structure of the data~\cite{Zhang2021CVPR, LCGS, lu2021low, lu2021target}. 
Consequently, LGI emerges as a promising research topic with a broad range of applications, such as point cloud segmentation~\cite{wang2019dynamic}, disease prediction~\cite{cosmo2020latent}, multi-view clustering~\cite{MVCGC}, and brain connectome representation~\cite{BrainConnectome}.

However, many LGI methods suffer from a so-called \textit{supervision starvation} (SS) problem~\cite{SLAPS}, where a number of edge weights are learned without any semantic supervision during the graph inference stage.  
Specifically, given a $k$-layer GNN, if a node and all of its predefined neighbors (from $1$- to $k$-hop) are unlabeled, the edge weights between this node and its neighbors will not contribute to the training loss and cannot be semantically optimal after training. 
This will lead to poor generalization performance since these under-trained weights are inevitably used to make predictions for testing samples. 
In this paper, we discover that the SS problem is actually caused by the graph sparsification operation, which severely destroys the important connections established between pivotal nodes and labeled ones. 
Based on this observation, we propose to \textit{restore the destroyed connections and replenish the missed supervision for better LGI}. 

Specifically, we first define the pivotal nodes as \textit{$k$-hop starved nodes}. 
Then, the SS problem can be transformed into a more manageable task of eliminating starved nodes. 
We propose to identify the $k$-hop starved nodes based on the $k$-th power of a given adjacency matrix.
After identification, we can diminish the starved nodes by incorporating a regularization adjacency matrix into the initial one. 
However, the above identification approach suffers from high computational complexity due to matrix multiplication. 
For example, given a $2$-layer GNN, identifying $2$-hop starved nodes requires at least $\mathcal{O}(n^3)$, where $n$ denotes the number of nodes. 
To address this, we present a more efficient alternative solution inspired by \textit{CUR matrix decomposition}~\cite{CUR2014, CURsiam}. 
We find that with appropriate column and row selection, the $U$ matrix obtained through CUR decomposition of the initial adjacency matrix is actually a zero matrix. 
Thus, we can eliminate the starved nodes by reconstructing the $U$ matrix.

Although recovering the $U$ matrix encourages more supervision, such strategy may cause a potential issue we call \textit{weight contribution rate decay}. 
In other words, the weight contribution rate (WCR) of non-starved nodes decays as the number of starved nodes increases, resulting in a final latent graph that heavily relies on the regularization one. 
To tackle this issue, we propose two simple strategies, \textit{i.e.}, decreasing the WCR of starved nodes and increasing the WCR of non-starved nodes (see Sec. \ref{WCRD} for details).   
The main contributions of this paper can be summarized as follows: 

\begin{itemize}
	\item We identify that graph sparsification is the main cause of the supervision starvation (SS) problem in latent graph inference (LGI). Therefore, we propose to restore the affinities corrupted by the sparsification operation to replenish the missed supervision for better LGI.
	
	\item By defining $k$-hop starved nodes, we transform the SS issue into a more manageable task of removing starved nodes. 
	Inspired by CUR decomposition, we propose a simple yet effective solution to determine the starved nodes and diminish them using a regularization graph.
	
	\item The proposed approach is model-agnostic and can be seamlessly integrated into various LGI models. 
	Extensive experiments demonstrate that eliminating the starved nodes consistently enhances the state-of-the-art LGI methods, particularly under extremely limited supervision.
\end{itemize}




\section{Preliminaries}

\subsection{Notations}
\label{definition}
Let $\mathbf{A}_{i:}$, $\mathbf{A}_{:j}$, $\mathbf{A}_{ij}$, and $\mathbf{A}^k$ denote the $i$-th row, the $j$-th column, the element at the $i$-th row and $j$-th column, and the $k$-th power of the matrix $\mathbf{A}$, respectively. 
Let $\mathbf{1}_n$ represent an $n$-dimensional column vector with all elements being $1$. 
Let $\mathbbm{1}_{\mathbb{R}^+}(\mathbf{A})$ be an element-wise indicator function that sets $\mathbf{A}_{ij}$ to $1$ if it is positive, and $0$ otherwise.
Conversely, $\mathbbm{1}_{\mathbb{R}^-}(\mathbf{A})$ sets  $\mathbf{A}_{ij}$ to $0$ if it is positive, and $1$ otherwise.

\subsection{Latent Graph Inference}
\begin{definition}[Latent Graph Inference]
	Given a graph $\mathcal{G}(\mathcal{V}, \mathbf{X} )$ containing $n$ nodes $\mathcal{V}=\{V_1, \ldots, V_n\}$ and a feature matrix $\mathbf{X} \in \mathbb{R}^{n\times d}$ with each row $\mathbf{X}_{i:} \in \mathbb{R}^d$ representing the $d$-dimensional attributes of node $V_i$, latent graph inference (LGI) aims to simultaneously learn the underlying graph topology encoded by an adjacency matrix $\mathbf{A} \in \mathbb{R}^{n\times n}$ and the discriminative $d'$-dimensional node representations $\mathbf{Z} \in \mathbb{R}^{n\times d'}$ based on $\mathbf{X}$, where the learned $\mathbf{A}$ and $\mathbf{Z}$ are jointly optimal for certain downstream tasks $\mathcal{T}$ given a specific loss function $\mathcal{L}$. 
\end{definition}
In general, an LGI model mainly consists of a latent graph generator $\mathcal{P}_{\mathbf{\Phi}}(\mathbf{X})$: $\mathbb{R}^{n\times d} \rightarrow \mathbb{R}^{n\times n}$ that generates an adjacency matrix $\mathbf{A}$ based on the node features $\mathbf{X}$, and a node encoder $\mathcal{F}_{\mathbf{\Theta}}(\mathbf{X}, \mathbf{A})$: $\mathbb{R}^{n\times d} \rightarrow \mathbb{R}^{n\times d'}$ that learns discriminative node representations $\mathbf{Z}$ based on $\mathbf{X}$ and the learned $\mathbf{A}$. 
In practice, $\mathcal{F}_{\mathbf{\Theta}}$ is typically implemented using a $k$-layer GNN, while  $\mathcal{P}_{\mathbf{\Phi}}$ can be realized through different strategies such as full parameterization~\cite{SunAAAI2022, DGM}, MLP~\cite{SLAPS, WWW22}, or attentive network~\cite{Jiang_2019_CVPR, IDGL}.
In this paper, we adopt the most common settings from existing LGI literature~\cite{SLAPS, LCGS, GRCN, IDGL, WWW22}, considering $\mathcal{T}$ as the semi-supervised node classification task and $\mathcal{L}$ as the cross-entropy loss.  


\subsection{Supervision Starvation}

To illustrate the supervision starvation problem~\cite{SLAPS}, we consider a general LGI model $\mathcal{M}$ consisting of a latent graph generator $\mathcal{P}_{\mathbf{\Phi}}$ and a node encoder $\mathcal{F}_{\mathbf{\Theta}}$. 
For simplicity, we ignore the activation function and assume that $\mathcal{F}_{\mathbf{\Theta}}$ is implemented using a $1$-layer GNN, \textit{i.e.}, $\mathcal{F}_{\mathbf{\Theta}}=\mathtt{GNN}_1(\mathbf{X}, \mathbf{A}; \mathbf{\Theta})$, where $\mathbf{A}=\mathcal{P}_{\mathbf{\Phi}}(\mathbf{X})$. 
For each node $\mathbf{X}_{i:}$, the corresponding node representation $\mathbf{Z}_{i:}$ learned by the model $\mathcal{M}$ can be expressed as:
\begin{equation}
\mathbf{Z}_{i:} = \mathbf{A}_{i:}\mathbf{X}\mathbf{\Theta} = \left(\sum_{j \in \Omega} \mathbf{A}_{ij}\mathbf{X}_{j:} \right)\mathbf{\Theta},
\end{equation}
where $\Omega=\{j\ |\  \mathbbm{1}_{\mathbb{R}^+}(\mathbf{A})_{ij}=1 \}$ and $\mathbf{A}_{ij}=\mathcal{P}_{\mathbf{\Phi}}(\mathbf{X}_{i:}, \mathbf{X}_{j:})$. Consider the node classification loss:
\begin{equation}
\min_{\mathbf{A}, \mathbf{\Theta}} \mathcal{L} = \sum_{i\in \mathcal{Y}_{L}} \sum_{j=1}^{|\mathcal{C}|} \mathbf{Y}_{ij} \ln \mathbf{Z}_{ij} = \sum_{i\in \mathcal{Y}_{L}} \mathbf{Y}_{i:} \ln \mathbf{Z}_{i:}^{\top} = \sum_{i\in \mathcal{Y}_{L}} \mathbf{Y}_{i:} \ln \left( \left(\sum_{j \in \Omega} \mathbf{A}_{ij}\mathbf{X}_{j:} \right)\mathbf{\Theta}\right)^\top,
\end{equation} 
where $\mathcal{Y}_{L}$ represents the set of indexes of labeled nodes and $|\mathcal{C}|$ denotes the size of label set. 
For $\forall i\in \mathcal{Y}_{L}$, $j \in \Omega$, $\mathbf{A}_{ij}$ is optimized via backpropagation under the supervision of label $\mathbf{Y}_{i:}$. 
For $\forall i \notin \mathcal{Y}_{L}$, however, if $j \notin \mathcal{Y}_{L}$ for $\forall j \in \Omega$, $\mathbf{A}_{ij}$ will receive no supervision from any label and, as a result, cannot be semantically optimal after training. 
Consequently, the learning models exhibit poor generalization as the predictions of testing nodes inevitably rely on these supervision-starved weights. 
This phenomenon is referred to as \textit{supervision starvation} (SS), where many edge weights are learned without any label supervision. 
It is easy to infer that this issue also persists in a $k$-layer GNN. 

\textit{We may ask why this problem arises?} 
In fact, the SS problem is caused by a common and necessary post-processing operation known as graph sparsification, which is employed in the majority of LGI methods~\cite{GRCN, SLAPS, GSENN, WWW22} to generate a sparse latent graph.
To be more specific, graph sparsification adjusts the initial dense graph to a sparse one through the following procedure:
\begin{equation}
\mathbf{A}_{ij}=\left\{
\begin{aligned}
&\mathbf{A}_{ij},  & \text{if } \  \mathbf{A}_{ij} \in \operatorname{top-\kappa}(\mathbf{A}_{i:})  \\
& 0, & \text{otherwise},
\end{aligned}
\right.
\end{equation}
where $\operatorname{top-\kappa}(\mathbf{A}_{i:})$ denotes the set of the top $\kappa$ values in $\mathbf{A}_{i:}$. 
After this sparsification operation, a significant number of edge weights are directly erased, including the crucial connections established between pivotal nodes and labeled nodes.
\textit{Another question that may arise is: how many important nodes or connections suffer from this problem?} We delve into this question in the next section.

\section{Methodology}

\subsection{Identification \& Elimination of Starved Nodes}
We first introduce the definitions of $k$-hop starved node and the corresponding $k$-hop starved weight. 
\begin{definition}[$k$-hop Starved Node]
Given a graph $\mathcal{G}(\mathcal{V}, \mathbf{X})$ consisting of $n$ nodes $\mathcal{V}=\{V_1, \ldots, V_n\}$ and the corresponding node features $\mathbf{X}$, for a $k$-layer graph neural network $\mathtt{GNN}_k(\mathbf{X}; \mathbf{\Theta)}$ with network parameters $\mathbf{\Theta}$, the unlabeled node $V_i$ is a {$k$-hop starved node} if, for $\forall \kappa \in \{1, \ldots, k\}$, $\forall V_j \in \mathcal{N}_\kappa(i)$, where $\mathcal{N}_\kappa(i)$ is the set of $\kappa$-hop neighbors of $V_i$, $V_j$ is unlabeled.   
Specifically, $0$-hop starved nodes are defined as the unlabeled nodes\footnote{Here, we want to clarify that self-connections are not considered when defining $k$-hop neighbors.}.
\label{definition1}
\end{definition}

Based on Definition \ref{definition1}, we can define the $k$-hop starved weight as follows. 
\begin{definition}[$k$-hop Starved Weight]
If an edge exists between nodes $V_i$ and $V_j$, the associated edge weight $\mathbf{\mathbf{A}}_{ij}$ is a {$k$-hop starved weight} if both of the $V_i$ and $V_j$ qualifies as $(k-1)$-hop starved nodes\footnote{Note that, for a $k$-layer GNN, a $k$-hop starved node also qualifies as a $(k-1)$-hop starved node.}.
\end{definition}

According to the definition presented, it is evident that $k$-hop starved weights are precisely the ones that receive no semantic supervision from labels. 
As a result, to address the SS problem, our focus should be on reducing the presence of $k$-hop starved nodes. 
The following theorem illustrates how we can identify such nodes based on a given initial adjacency matrix. 

\begin{theorem}
	Given a sparse adjacency matrix $\mathbf{A} \in \mathbb{R}^{n\times n}$ with self-connections generated on graph $\mathcal{G}(\mathcal{V}, \mathbf{X})$ by a latent graph inference model with a $k$-layer graph neural network $\mathtt{GNN}_k(\mathbf{X}; \mathbf{\Theta)}$, the node $V_i$ is a {$k$-hop starved node}, if $\exists j \in \{1, \ldots, n\}$, such that $[\mathbbm{1}_{\mathbb{R}^+}(\mathbf{A})]^k_{ij}=1$, and for $\forall j \in \{j\ |\ [\mathbbm{1}_{\mathbb{R}^+}(\mathbf{A})]_{ij}=1 \cup [\mathbbm{1}_{\mathbb{R}^+}(\mathbf{A})]^2_{ij}=1 \cup \ldots \cup [\mathbbm{1}_{\mathbb{R}^+}(\mathbf{A})]^k_{ij}=1  \}$, $V_j$ is unlabeled. 
	\label{theorm1}
\end{theorem}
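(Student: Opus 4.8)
The plan is to reduce the two algebraic hypotheses of the theorem to the purely combinatorial characterization in Definition~\ref{definition1}. Write $\mathbf{B} = \mathbbm{1}_{\mathbb{R}^+}(\mathbf{A})$ for the binary adjacency matrix; since $\mathbf{A}$ carries self-connections we have $\mathbf{B}_{ii}=1$ for every $i$. The engine of the argument is the standard walk-counting identity: I would first prove by induction on $\kappa$ that $\mathbf{B}^{\kappa}_{ij}$ equals the number of length-$\kappa$ walks from $V_i$ to $V_j$ in the graph encoded by $\mathbf{B}$, so that $\mathbf{B}^{\kappa}_{ij}>0$ if and only if some length-$\kappa$ walk exists. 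The base case $\kappa=1$ is immediate, and the inductive step follows from $\mathbf{B}^{\kappa}_{ij} = \sum_{\ell} \mathbf{B}^{\kappa-1}_{i\ell}\mathbf{B}_{\ell j}$, which sums over the penultimate vertex of the walk and is a sum of nonnegative integers.

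Next I would identify the index set appearing in the second hypothesis. Let $S = \{\, j : \mathbf{B}^{1}_{ij}>0 \text{ or } \cdots \text{ or } \mathbf{B}^{k}_{ij}>0 \,\}$. The key claim is $S = \{i\} \cup \bigcup_{\kappa=1}^{k}\mathcal{N}_{\kappa}(i)$. For the inclusion $\bigcup_{\kappa}\mathcal{N}_{\kappa}(i)\subseteq S$: if $V_j\in\mathcal{N}_{\kappa}(i)$ then there is a length-$\kappa$ walk from $V_i$ to $V_j$, whence $\mathbf{B}^{\kappa}_{ij}>0$ and $j\in S$; moreover $i\in S$ since $\mathbf{B}^{1}_{ii}=\mathbf{B}_{ii}=1$. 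For the reverse inclusion, a positive entry $\mathbf{B}^{\kappa}_{ij}>0$ certifies a length-$\kappa$ walk and hence a path of length at most $\kappa\le k$, forcing $j=i$ or $V_j\in\mathcal{N}_{\kappa'}(i)$ for some $\kappa'\le k$. The self-connections are exactly what make this clean: they guarantee $i\in S$ and let each power record reachability \emph{within} $\kappa$ hops rather than in exactly $\kappa$ hops.

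With this identification the conclusion is essentially a restatement. The second hypothesis asserts that $V_j$ is unlabeled for every $j\in S$. Taking $j=i$ (which lies in $S$ by the diagonal term $\mathbf{B}_{ii}=1$) shows $V_i$ is unlabeled, and letting $j$ range over $\bigcup_{\kappa=1}^{k}\mathcal{N}_{\kappa}(i)\subseteq S$ shows that every $\kappa$-hop neighbor of $V_i$ with $1\le\kappa\le k$ is unlabeled. These are precisely the two requirements of Definition~\ref{definition1}, so $V_i$ is a $k$-hop starved node. The first hypothesis, $\exists j$ with $\mathbf{B}^{k}_{ij}>0$, plays only the role of a non-degeneracy guard and is automatically satisfied by $j=i$; I would remark on this rather than invoke it.

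The step I expect to be the main obstacle is the precise reconciliation between the algebraic notion captured by matrix powers (\emph{walks} of a given length) and the combinatorial notion in Definition~\ref{definition1} (\emph{hop-neighbors}, i.e. shortest-path distance, with self-connections explicitly excluded per the footnote). The care is twofold: ensuring the self-loops make $\mathbf{B}^{\kappa}_{ij}>0$ encode within-$\kappa$ reachability so that the union defining $S$ collapses to the full $k$-hop neighborhood, and ensuring that although $i\notin\mathcal{N}_{\kappa}(i)$ the node $V_i$ is still forced to be unlabeled through the diagonal entry $\mathbf{B}_{ii}=1$. Everything else---the walk-counting induction and the set manipulations---is routine.
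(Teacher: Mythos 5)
Your proposal is correct and follows essentially the same route as the paper's own proof: both read positive entries of the powers of $\mathbbm{1}_{\mathbb{R}^+}(\mathbf{A})$ as certifying reachability within $\kappa$ hops, identify the union of their supports with the full $k$-hop neighborhood, and then invoke Definition~\ref{definition1}. If anything, you are more careful than the paper on two points it glosses over --- using the diagonal entry $\mathbf{B}_{ii}=1$ to force $V_i$ itself to be unlabeled (which Definition~\ref{definition1} requires), and noting that with self-loops a positive entry of $\mathbf{B}^{\kappa}$ encodes reachability \emph{within} $\kappa$ hops rather than at exactly $\kappa$ hops.
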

\begin{proof}
	Please refer to the supplementary material for details.
\end{proof}

\begin{wrapfigure}{r}{7.0cm}
	\centering
	\vspace{-8mm}
	\includegraphics[scale=0.47,trim=0 0 0 0,clip]{./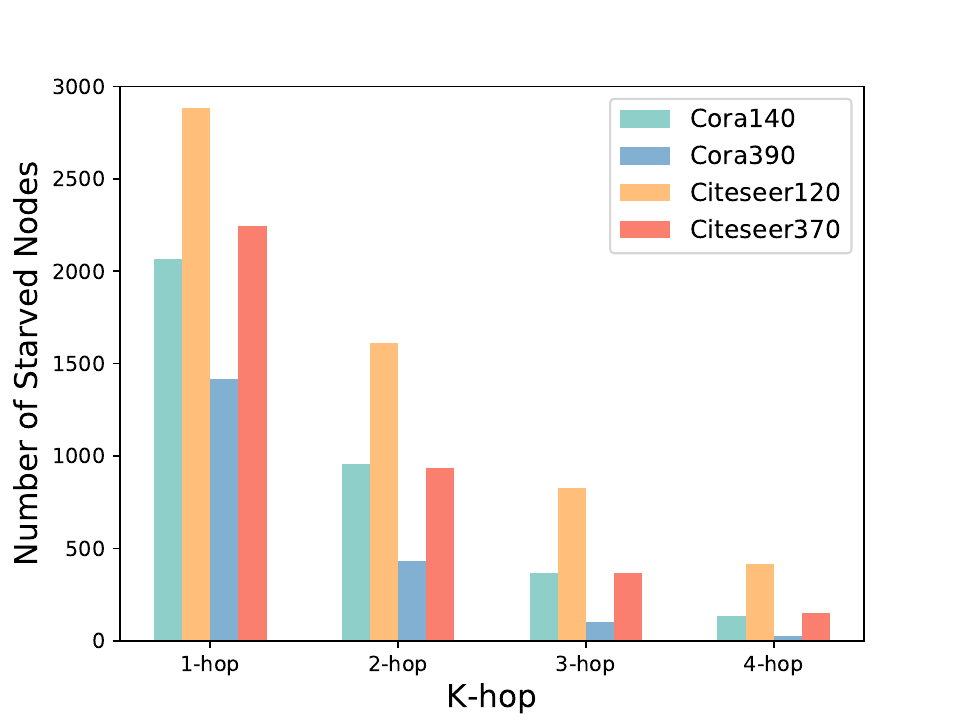}
	\caption{Illustration of $k$-hop starved nodes on different datasets. Obviously, the number of $k$-hop starved nodes decreases as the value of $k$ increases. }
	\vspace{-1mm}
	\label{fig1}
\end{wrapfigure}

To provide an intuitive perspective, we use two real-world graph datasets, namely Cora ($2708$ nodes) and Citeseer ($3327$ nodes), as examples. We calculate the number of $k$-hop starved nodes for $k=1, 2, 3, 4$, based on their original graph topology. 
Fig. \ref{fig1} shows the statistical results for the Cora140, Cora390, Citesser120, and Citeseer370 datasets, where the suffix number represents the number of labeled nodes. 
From Fig. \ref{fig1}, we observe that as the value of $k$ increases, the number of starved nodes decreases. 
This can be explained by the fact that as $k$ increases, the nodes have more neighbors (from $1$- to $k$-hop), and the possibility of having at least one labeled neighbor increases. 
Adopting a deeper GNN (larger $k$) can thus mitigate the SS problem. 
However, it is important to consider that deeper GNNs result in higher computational consumption and may lead to poorer generalization performance~\cite{Li2018, Kenta2020, Alon2021}. 
Furthermore, as shown in Fig. \ref{fig1}, even with a $4$-layer GNN, there are still hundreds of $4$-hop starved nodes in the Citesser120. 
Therefore, we believe that employing a deeper GNN is not the optimal solution to resolve the SS problem \footnote{We discuss this further in the supplementary material.}.

In fact, Theorem \ref{theorm1} implicitly indicates how to alleviate the SS problem. 
Intuitively, a straightforward solution is to ameliorate the given adjacency matrix $\mathbf{A}$ in order to reduce the number of starved nodes.  
This can be accomplished simply by reconstructing the connections between starved nodes and the labeled ones. 
Technically, we can achieve this by adding a regularization adjacency matrix $\mathbf{B}$ to  $\mathbf{A}$:
\begin{equation}
\widetilde{\mathbf{A}}^{} = \mathbf{A} + \alpha {\mathbf{B}}^{},
\label{equation3}
\end{equation}
where $\widetilde{\mathbf{A}}$ is the refined adjacency matrix, $\mathbf{B}$ models the recovered affinities between staved nodes and labeled ones, and $\alpha$ is a balanced parameter that controls the contribution of $\mathbf{A}$ and $\mathbf{B}$.
According to Theorem \ref{theorm1}, we can identify the starved nodes based on $\mathbf{A}$ and replenish the missed supervision through $\mathbf{B}$, thereby preventing $\widetilde{\mathbf{A}}$ from being starved. 
Specifically, for each starved node $V_i$, we can search for at least one closest labeled node $V_l$, and restore at least one connection between $V_l$ and $V_j$ for $j \in \{\mathcal{N}_\kappa(i)\cup i\}$ such that $[\mathbbm{1}_{\mathbb{R}^+}(\mathbf{B})]^{\kappa}_{jl}=1$, where $\kappa$ can be arbitrarily chosen from the set of $\{1, 2, \ldots, k\}$. 
Although this strategy is  effective, it may be computationally complex. 
Even with a small value of $k$, the computational cost of identifying $k$-hop starved nodes based on Theorem \ref{theorm1} is prohibitively expensive. 
For example, when identifying $2$-hop starved nodes, the time complexity of computing $\mathbf{A}^2$ alone reaches $\mathcal{O}(n^3)$. 
In the next section, we will propose a more efficient solution.



\subsection{CUR Decomposition Makes Better Solution}
Inspired by CUR matrix decomposition~\cite{CUR2014, CURsiam}, we propose an efficient alternative approach to identify the starved nodes. 
We first present the definition of CUR matrix decomposition.

\begin{definition}[CUR Decomposition~\cite{CUR2014}] 
	Given $\mathbf{Q} \in \mathbb{R}^{n\times m}$ of rank $\rho=\mathtt{rank}(\mathbf{Q})$, rank parameter $k < \rho$, and accuracy parameter $0 < \varepsilon < 1$, construct column matrix $\mathbf{C} \in \mathbb{R}^{n\times c}$ with $c$ columns from $\mathbf{Q}$, row matrix $\mathbf{R} \in \mathbb{R}^{r\times m}$ with $r$ rows from $\mathbf{Q}$, and intersection matrix $\mathbf{U} \in \mathbb{R}^{c\times r}$ with $c$, $r$, and $\mathtt{rank}(\mathbf{U})$ being as small as possible, in oder to reconstruct $\mathbf{Q}$ within relative-error: 
	\begin{equation}
	||\mathbf{Q}-\mathbf{CUR}||_F^2 \leq (1+\varepsilon)||\mathbf{Q}-\mathbf{Q}_k||_F^2.
	\end{equation}
	Here, $\mathbf{Q}_k = \mathbf{U}_k \mathbf{\Sigma}_k \mathbf{V}_k^T \in \mathbb{R}^{n\times m}$ is the best rank $k$ matrix obtained via the singular value decomposition (SVD) of $\mathbf{Q}$. 
\end{definition}

With the definition of CUR decomposition, we can find a more efficient solution to identify the starved nodes. 
The following theorem demonstrates how we can accomplish this goal.  

\begin{theorem}
	Given a sparse adjacency matrix $\mathbf{A} \in \mathbb{R}^{n\times n}$ with self-connections generated on graph $\mathcal{G}(\mathcal{V}, \mathbf{X})$,
	construct $\mathbf{C} = \mathbf{A}[:, col\_mask] \in \mathbb{R}^{n\times c}$, where $col\_mask \in \{0, 1\}^{n}$ contains only $c$ positive values corresponding to $c$ labeled nodes, and $\mathbf{R} = \mathbf{A}[row\_mask, :]  \in \mathbb{R}^{r\times n}$ with $row\_mask = \mathbbm{1}_{\mathbb{R}^-}(\mathbf{C}\mathbbm{1}_c) \in \{0, 1\}^{n}$. Then, (a) ${\mathbf{U}} = \mathbf{A}[row\_mask, col\_mask] = \mathbf{0} \in \mathbb{R}^{r\times c}$, where $\mathbf{0}$ is a zero matrix, (b) the set of $1$-hop starved nodes $\texttt{Set}_1(r) = \{V_i | i \in {\texttt{RM}_+} \} $, where $\texttt{RM}_+ \in \mathbb{N}^r$ indicates the set of indexes of positive elements from $row\_mask$, and (c) for each $i \in \texttt{RM}_+$, $V_i$ is a $2$-hop starved node if, for $\forall j$ satisfying $[\mathbbm{1}_{\mathbb{R}^+}(\mathbf{R})]_{ij}=1$, $j \in \texttt{RM}_+$.
	\label{theorm2}
\end{theorem}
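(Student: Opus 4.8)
The plan is to prove the three assertions in the order (b), (a), (c), since all of them hinge on a single characterization of the vector $row\_mask$. First I would unpack its definition. The product $\mathbf{C}\mathbbm{1}_c$ sums, for each node $i$, exactly the entries $\mathbf{A}_{il}$ over the labeled columns $l$ selected by $col\_mask$, that is $(\mathbf{C}\mathbbm{1}_c)_i = \sum_{l\ \text{labeled}} \mathbf{A}_{il}$. Applying $\mathbbm{1}_{\mathbb{R}^-}$ then sets $row\_mask_i = 1$ precisely when this sum is non-positive. Since a sparse affinity matrix has non-negative entries, the sum is zero iff every term $\mathbf{A}_{il}$ vanishes, i.e. node $i$ has no edge to any labeled node. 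I would also record that a labeled node $i$ contributes its own positive self-connection $\mathbf{A}_{ii}$ to the sum (its column is selected), so every labeled node satisfies $row\_mask_i = 0$. Consequently $row\_mask_i = 1$ holds exactly for the unlabeled nodes whose $1$-hop neighbourhood contains no labeled node, which is the content of Definition~\ref{definition1} for $k=1$. This proves (b).

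Part (a) is then immediate: each entry of $\mathbf{U} = \mathbf{A}[row\_mask, col\_mask]$ is some $\mathbf{A}_{il}$ with $i \in \texttt{RM}_+$ and $l$ labeled. By the characterization just established, $\sum_{l} \mathbf{A}_{il} = 0$ with non-negative summands forces $\mathbf{A}_{il} = 0$ for every labeled $l$, so $\mathbf{U} = \mathbf{0}$.

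For (c) I would argue that the stated condition upgrades a $1$-hop starved node to a $2$-hop starved one. Fix $i \in \texttt{RM}_+$, already known to be $1$-hop starved. The rows of $\mathbf{R}$ are exactly the rows of $\mathbf{A}$ indexed by $\texttt{RM}_+$, so $[\mathbbm{1}_{\mathbb{R}^+}(\mathbf{R})]_{ij} = 1$ means $\mathbf{A}_{ij} > 0$, i.e. $j$ is a $1$-hop neighbour of $i$ (or $j = i$ via the self-loop). The hypothesis says every such $j$ lies in $\texttt{RM}_+$, hence every $1$-hop neighbour of $i$ is itself $1$-hop starved and therefore has no labeled $1$-hop neighbour. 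Since the $2$-hop neighbours of $i$ are contained in the union of the $1$-hop neighbourhoods of $i$'s $1$-hop neighbours, they are all unlabeled; combined with the fact that $i$'s own $1$-hop neighbours are unlabeled, this shows $i$ is $2$-hop starved by Definition~\ref{definition1}.

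The step I expect to require the most care is the bookkeeping of the self-connections, which are present in $\mathbf{A}$ but explicitly excluded from the notion of neighbour in Definition~\ref{definition1}. I must check three points: that the self-loop of a labeled node keeps it out of $row\_mask$ (needed in (b)); that for an unlabeled $i$ the self-loop column is \emph{not} selected by $col\_mask$, so $\mathbf{A}_{ii}$ does not spuriously inflate $(\mathbf{C}\mathbbm{1}_c)_i$; and that in (c) the diagonal term $j = i$ permitted by $[\mathbbm{1}_{\mathbb{R}^+}(\mathbf{R})]_{ii} = 1$ is harmless because $i \in \texttt{RM}_+$ already holds. The only substantive assumption I rely on is the non-negativity of the affinity weights, which is what lets me pass from a vanishing row-sum to the vanishing of each individual entry.
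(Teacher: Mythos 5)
Your proposal is correct and follows essentially the same route as the paper's proof: characterize $row\_mask$ via the row sums of $\mathbf{C}$ to obtain (b), note that (a) is an immediate consequence, and derive (c) by observing that if every $1$-hop neighbour of $i$ is itself $1$-hop starved then all of $i$'s $2$-hop neighbours are unlabeled. You are in fact somewhat more careful than the paper, which leaves implicit both the non-negativity of the affinities (needed to pass from a vanishing row sum to vanishing entries) and the self-connection bookkeeping that keeps labeled nodes out of $\texttt{RM}_+$.
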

\begin{proof}
Please refer to the supplementary material for details.
\end{proof}

Theorem \ref{theorm2} provides a more efficient alternative for identifying $k$-hop starved nodes for $k\in \{1, 2\}$. 
In fact, the column matrix $\mathbf{C}$ models the relationships between all nodes and $c$ labeled nodes, the row matrix $\mathbf{R}$ models the affinities between $r$ $1$-hop starved nodes and the whole nodes, and the intersection matrix ${\mathbf{U}}$ models the strength of connections between $r$ $1$-hop starved nodes and $c$ labeled nodes. 
Theorem \ref{theorm2} states that ${\mathbf{U}}=\mathbf{0}$, indicting that there are no connections between the starved nodes and the labeled ones. 
Based on this observation, we propose a simpler approach to reduce the number of starved nodes. 
Specifically, we rebuild the intersection matrix ${\mathbf{U}}$ to ensure that the reconstructed $\widetilde{\mathbf{U}} \neq \mathbf{0}$\footnote{In fact, with the reconstructed $\widetilde{\mathbf{U}}$, we can set $\widetilde{\mathbf{Q}}=\mathbf{C\widetilde{U}R}$ as the regularization $\mathbf{B}$. It is, of course, sensible and feasible. However, a potential drawback is that the reconstruction of $\widetilde{\mathbf{Q}}$ requires matrix multiplications of three matrices, which is time-consuming. Unexpectedly, we find that only constructing matrix $\widetilde{\mathbf{U}}$ is enough to solve the SS problem since it models the relationships between $1$-hop starved nodes and labeled ones.}. 
Consequently, Eq. \eqref{equation3} can be rewritten as:
\begin{equation}
\widetilde{\mathbf{A}}^{} = \mathbf{A}_{} + \alpha \mathbf{B} = \mathbf{A}_{} + \alpha \Gamma\left(\widetilde{\mathbf{U}}, n\right),
\label{equation5}
\end{equation}
where function $\Gamma(\widetilde{\mathbf{U}}, n)$ extends the matrix $\widetilde{\mathbf{U}}\in \mathbb{R}^{r\times c}$ to an $n \times n$ matrix by padding $n-r$ rows of zeros and $n-c$ columns of zeros in the corresponding positions.

The question now turns to how to reconstruct the intersection matrix $\widetilde{\mathbf{U}}$.
For simplicity, we directly adopt the same strategy used in constructing $\mathbf{A}_{}$. 
Specifically, for each row $i$ of $\widetilde{\mathbf{U}}$, we establish a connection between $V_i$ and $V_j$ for $\forall j \in {\texttt{CM}_+}$, where ${\texttt{CM}_+} \in  \mathbb{N}^c$ represents the set of indexes of positive elements from $col\_mask$. 
We then assign weights to these connections based on their distance.
Note that, if we ensure each row of $\widetilde{\mathbf{U}}$ has at least one weight greater than 0, there will be no $\kappa$-hop starved nodes for $\forall \kappa > 1$. 
This means that we do not need to feed the $k$-hop starved nodes satiated, simply feeding $\kappa$-hop ones for $\forall \kappa < k$ makes $k$-hop starved nodes cease to exist. 
In addition, compared with the time complexity of $\mathcal{O}(dn^2)$ to construct $\mathbf{A}_{}$, the time complexity of reconstructing $\widetilde{\mathbf{U}}$ is $\mathcal{O}(drc)$. The additional computational burden is relatively negligible since $c \ll n$.



\subsection{Weight Contribution Rate Decay}
\label{WCRD}

Directly replenishing the additional supervision encoded in $\widetilde{\mathbf{U}}$ by Eq. \eqref{equation5} may cause a potential issue we refer to as \textit{weight contribution rate decay}. 
To understand this issue, we consider the $j$-th column vector $\widetilde{\mathbf{A}}_{:j}^{}$ of $\widetilde{\mathbf{A}}_{}^{}$ for $j\in {\texttt{CM}_+}$. 
We define the weight contribution rates (WCR) of non-starved nodes $\rho_{-}$ and the WCR of starved nodes $\rho_{+}$ as:
\begin{align}
\rho_{-}
&= \frac{\sum_{i \notin \texttt{RM}_+} \widetilde{\mathbf{A}}_{ij}^{}}{\sum_i \widetilde{\mathbf{A}}_{ij}^{}}   
= \frac{\sum_{i \notin \texttt{RM}_+} \mathbf{A}_{ij} }{\sum_i \widetilde{\mathbf{A}}_{ij}}, \quad 
\rho_{+} 
= 1 - \rho_{-}  
= 1 - \frac{\sum_{i \notin \texttt{RM}_+} \mathbf{A}_{ij} }{\sum_i \widetilde{\mathbf{A}}_{ij}} 
= \frac{\sum_{i \in \texttt{RM}_+} \alpha\widetilde{\mathbf{U}}_{ij}}{\sum_i \widetilde{\mathbf{A}}_{ij}}.
\label{equation6} 
\end{align}

From Eq. \eqref{equation6}, we observe that the WCR of non-starved nodes $\rho_{-}$ decays as the number of starved nodes increases (\textit{i.e.}, $\rho_{+}$ increases). 
This means that $\rho_{-}$ becomes negligible if there are numerous starved nodes. As a result, the $j$-th column vector $\widetilde{\mathbf{A}}_{:j}^{}$ of $\widetilde{\mathbf{A}}_{}^{}$ for $j\in {\texttt{CM}_+}$ heavily relies on the $j$-th column vector $\widetilde{\mathbf{U}}_{:j}^{}$ of the reconstructed intersection matrix $\widetilde{\mathbf{U}}$. 
This outcome is not desirable since the regularization imposed should not dominate a significant portion of the final adjacency matrix.
Recalling our initial objective of properly refining the original latent graph to replenish the missed supervision, we will design two simple strategies to relieve this issue. 

\textbf{Decrease $\rho_{+}$.} 
On the one hand, we can decrease the WCR of starved nodes $\rho_{+}$ by selecting only $\tau (\tau < c)$ labeled nodes as the supplementary adjacent points for each $1$-hop starved node. 
This results in a sparse intersection matrix $\widehat{\mathbf{U}}$. 
For this strategy, we present the following proposition:
\begin{proposition}
	Suppose that we randomly select $\tau$ out of $c$ labeled nodes as the supplementary adjacent points for each $1$-hop starved node. If there are $r$ $1$-hop starved nodes, then for $\forall j\in {\texttt{CM}_+}$, we have $\widehat{\rho}_{+}
	= \frac{\sum_{i \in \texttt{RM}_+} \alpha\widehat{\mathbf{U}}_{ij}}{\sum_i \widetilde{\mathbf{A}}_{ij}} \leq {\rho}_{+}$, where $\widehat{\rho}_{+}=\rho_{+}$ with only a probability of $\left(\frac{\tau}{c}\right)^r$.    
	\label{theorm3}
\end{proposition}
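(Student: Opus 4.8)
The plan is to read the proposition as a per-column statement (fix any $j \in \texttt{CM}_+$) comparing the dense intersection matrix $\widetilde{\mathbf{U}}$ with its randomly sparsified counterpart $\widehat{\mathbf{U}}$, keeping the denominator $\sum_i \widetilde{\mathbf{A}}_{ij}$ fixed exactly as written. The first thing I would record is a structural observation: since $\widehat{\mathbf{U}}$ is built by the \emph{same} distance-based weighting as $\widetilde{\mathbf{U}}$, but retaining only $\tau$ of the $c$ labeled columns in each starved row, every entry satisfies $\widehat{\mathbf{U}}_{ij} = \widetilde{\mathbf{U}}_{ij}$ when column $j$ is among the $\tau$ chosen for row $i$, and $\widehat{\mathbf{U}}_{ij} = 0$ otherwise. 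Because each retained affinity is strictly positive, this yields the entrywise bound $0 \le \widehat{\mathbf{U}}_{ij} \le \widetilde{\mathbf{U}}_{ij}$ for all $i \in \texttt{RM}_+$, so the sparsification only ever zeroes entries rather than reweighting them.

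From here the inequality $\widehat{\rho}_{+} \le \rho_{+}$ is immediate. The denominator $\sum_i \widetilde{\mathbf{A}}_{ij}$ is positive (the labeled node $j$ contributes its self-connection, since $j \notin \texttt{RM}_+$) and is held common to both ratios; with $\alpha > 0$, summing the entrywise bound over $i \in \texttt{RM}_+$ gives $\sum_{i \in \texttt{RM}_+} \alpha\widehat{\mathbf{U}}_{ij} \le \sum_{i \in \texttt{RM}_+} \alpha\widetilde{\mathbf{U}}_{ij}$, so the numerator can only shrink while the denominator is unchanged. I would add, as a remark, that the conclusion survives even if one recomputes the denominator with $\widehat{\mathbf{U}}$: writing $a = \sum_{i \notin \texttt{RM}_+}\mathbf{A}_{ij} > 0$ (using Theorem \ref{theorm2}(a), which forces $\mathbf{A}_{ij}=0$ on the starved rows), the map $x \mapsto x/(a+x)$ is increasing for $x \ge 0$, and the numerator only decreases, so monotonicity again gives $\widehat{\rho}_{+} \le \rho_{+}$.

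For the probability claim I would pin down exactly when equality holds. Since every $\widetilde{\mathbf{U}}_{ij}$ is strictly positive, dropping any single entry in column $j$ strictly decreases $\sum_{i \in \texttt{RM}_+}\widehat{\mathbf{U}}_{ij}$; hence $\widehat{\rho}_{+} = \rho_{+}$ if and only if column $j$ is retained in \emph{every} one of the $r$ starved rows. For one row, selecting $\tau$ of the $c$ labeled nodes uniformly at random, the probability that the fixed column $j$ is chosen is $\binom{c-1}{\tau-1}/\binom{c}{\tau} = \tau/c$. Invoking independence of the selections across the $r$ starved rows, I would multiply these probabilities to obtain $\Pr[\widehat{\rho}_{+} = \rho_{+}] = (\tau/c)^r$.

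The argument is short, so there is no deep obstacle; the two points that demand care are (i) justifying that retained weights are preserved exactly, so that the comparison genuinely reduces to a zeroing-out rather than a reweighting, which is where the strict positivity of the distance-based affinities must be stated explicitly, and (ii) being precise that the single-column inclusion probability is $\tau/c$ and that the $r$ rows are sampled independently, since it is independence that turns the equality event into the clean product $(\tau/c)^r$.
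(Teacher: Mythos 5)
Your proof is correct and follows essentially the same route as the paper's: both arguments reduce the inequality to the fact that sparsification only zeroes entries of $\widetilde{\mathbf{U}}$ while the denominator is held fixed, and both obtain the equality probability $\left(\frac{\tau}{c}\right)^r$ by observing that $\widehat{\rho}_{+}=\rho_{+}$ forces every one of the $r$ starved rows to retain column $j$, each independently with probability $\frac{\tau}{c}$. Your write-up is somewhat more careful than the paper's (making explicit the strict positivity of the retained weights, the computation $\binom{c-1}{\tau-1}/\binom{c}{\tau}=\tau/c$, and the independence across rows), but the underlying argument is the same.
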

\begin{proof}
	Please refer to the supplementary material for details.
\end{proof}

\textbf{Increase $\rho_{-}$.} 
On the other hand, we can improve the WCR of non-starved nodes $\rho_{-}$ by magnifying the weights of non-starved nodes. 
Specifically, we construct an additional regularization matrix $\mathbf{Q} \in \mathbb{R}^{n \times c}$, where its $c$ columns correspond to the $c$ labeled nodes and $\mathbf{Q}_{ij}=0$ for $\forall i \in \texttt{RM}_+$. 
For $\forall i \notin \texttt{RM}_+$, we establish connections between the node $V_i$ and $\tau (\tau < c)$ labeled nodes $V_j$, and assign the corresponding weights $\mathbf{Q}_{ij}$ using a similar strategy as in constructing $\widehat{\mathbf{U}}$. 
By adding the additional regularization matrix $\mathbf{Q}$, the refined adjacency matrix $\widehat{\mathbf{A}}^{} $ can be expressed as:
\begin{equation}
\widehat{\mathbf{A}}^{} = \mathbf{A}_{} + \alpha \left(\Gamma\left(\widehat{\mathbf{U}}, n\right) + \Gamma\left(\mathbf{Q}, n\right)\right).
\label{equation7}
\end{equation}
Similarly, for this strategy, we have the following proposition:
\begin{proposition}
	Suppose that we randomly select $\tau$ out of $c$ labeled nodes as the supplementary adjacent points for each non-starved node. If there are $r$ $1$-hop starved nodes, then for $\forall j\in {\texttt{CM}_+}$, we have $\widehat{\rho}_{-}
	= \frac{\sum_{i \notin \texttt{RM}_+} \widehat{\mathbf{A}}_{ij}^{}}{\sum_i \widehat{\mathbf{A}}_{ij}^{}}   
	= \frac{\sum_{i \notin \texttt{RM}_+} \widetilde{\mathbf{A}}_{ij}^{} + \alpha \sum_{i \notin \texttt{RM}_+} \mathbf{Q}_{ij}  }{\sum_i \widetilde{\mathbf{A}}_{ij}^{} + \alpha \sum_{i \notin \texttt{RM}_+} \mathbf{Q}_{ij} }   \geq {\rho}_{-}$, where $\widehat{\rho}_{-}=\rho_{-}$ with only a probability of $\left(1-\frac{\tau}{c}\right)^{n-r}$.    
	\label{theorm4}
\end{proposition}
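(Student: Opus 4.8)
The plan is to reduce the inequality to an elementary statement about a single fraction, and then obtain the equality probability by a short symmetry computation on the random selection. Throughout I fix a column index $j \in \texttt{CM}_+$, since the claim is stated columnwise. For this proposition the relevant refined matrix is $\widehat{\mathbf{A}} = \widetilde{\mathbf{A}} + \alpha\Gamma(\mathbf{Q}, n)$ (the ``increase $\rho_-$'' strategy in isolation, consistent with the $\widetilde{\mathbf{A}}$ appearing in the statement). First I would record how $\mathbf{Q}$ enters this column: by construction $\mathbf{Q}_{ij} = 0$ for every $i \in \texttt{RM}_+$, and $\Gamma(\mathbf{Q}, n)$ merely pads the non-starved rows back into their positions, so $\widehat{\mathbf{A}}_{ij} = \widetilde{\mathbf{A}}_{ij} + \alpha\mathbf{Q}_{ij}$ with the correction $\alpha\mathbf{Q}_{ij}$ supported only on $i \notin \texttt{RM}_+$. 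Writing $P = \sum_{i \notin \texttt{RM}_+} \widetilde{\mathbf{A}}_{ij}$, $S = \sum_i \widetilde{\mathbf{A}}_{ij}$, and $W = \alpha\sum_{i \notin \texttt{RM}_+} \mathbf{Q}_{ij} \geq 0$, the two rates become $\rho_- = P/S$ and $\widehat{\rho}_- = (P+W)/(S+W)$, matching the fractions in the statement.

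Next I would prove the inequality by direct cross-multiplication, computing $\widehat{\rho}_- - \rho_- = \tfrac{W(S-P)}{S(S+W)}$. Here $S - P = \sum_{i \in \texttt{RM}_+} \widetilde{\mathbf{A}}_{ij} \geq 0$ because all affinities are nonnegative, and $W \geq 0$, so the difference is nonnegative and $\widehat{\rho}_- \geq \rho_-$. Conceptually this is just the mediant effect: since $\rho_- = P/S \leq 1$, injecting the same nonnegative mass $W$ into numerator and denominator can only push the ratio toward $1$.

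Finally I would characterize and count the equality event. In the generic situation where at least one $1$-hop starved node is linked to $j$ through $\widetilde{\mathbf{U}}$, we have $S - P = \alpha\sum_{i \in \texttt{RM}_+} \widetilde{\mathbf{U}}_{ij} > 0$; note Theorem~\ref{theorm2} gives $\mathbf{A}_{ij} = 0$ on the starved-by-labeled block, hence $\widetilde{\mathbf{A}}_{ij} = \alpha\widetilde{\mathbf{U}}_{ij}$ there, which is what lets me identify $S-P$ with the $\widetilde{\mathbf{U}}$ contribution. Under $S > P$ the factor $W(S-P)$ vanishes exactly when $W = 0$, i.e.\ when \emph{no} non-starved node selected $j$ among its $\tau$ supplementary labeled neighbors. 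For a single non-starved node a uniform $\tau$-subset of the $c$ labeled nodes contains the fixed $j$ with probability $\binom{c-1}{\tau-1}/\binom{c}{\tau} = \tau/c$, so $j$ is missed with probability $1 - \tau/c$; multiplying over the $n - r$ independent non-starved selections yields $\Pr[W = 0] = (1 - \tau/c)^{n-r}$, which is precisely $\Pr[\widehat{\rho}_- = \rho_-]$.

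The one delicate point, which I would flag explicitly rather than bury, is the biconditional ``equality $\iff W = 0$'': it uses $S > P$, and fails in the degenerate case where $j$ has no starved neighbor in $\widetilde{\mathbf{U}}$ (then $\rho_- = \widehat{\rho}_- = 1$ trivially for any selection). Everything else is elementary arithmetic together with the one-line symmetry argument for the sampling probability, so I expect the bookkeeping of what $\widehat{\mathbf{A}}_{ij}$ equals on starved versus non-starved rows, plus this equality-case caveat, to be the only real obstacles.
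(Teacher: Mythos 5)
Your proof is correct and follows essentially the same route as the paper's: both reduce the inequality to the mediant effect of adding the nonnegative mass $W=\alpha\sum_{i\notin\texttt{RM}_+}\mathbf{Q}_{ij}$ to numerator and denominator, and both obtain the equality probability as $\left(1-\tau/c\right)^{n-r}$ from the per-node miss probability $1-\tau/c$ over the $n-r$ independent non-starved selections. Your explicit cross-multiplication and your caveat about the degenerate case $S=P$ (where equality holds for any selection) are refinements the paper's terser argument omits, but they do not change the approach.
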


\begin{proof}
	Please refer to the supplementary material for details.
\end{proof}


\subsection{End-to-End Training}

Note that the proposed approach is model-agnostic and can be seamlessly integrated into existing LGI models. 
In Sec. \ref{experi}, we will apply our design to state-of-the-art LGI methods~\cite{SLAPS, GRCN, LCGS, IDGL} and compare its performance with the original approach. 
Therefore, we follow these methods and implement the node encoder $\mathcal{F}_{\mathbf{\Theta}}$ using a $2$-layer GNN.
Then, we calculate the cross-entropy loss $\mathcal{L}_{\operatorname{ce}}$ between the true labels $\mathbf{Y}$ and the predictions $\mathbf{Z}$, as well as a graph regularization loss $\mathcal{L}_{\operatorname{reg}}$ on $\widehat{\mathbf{A}}$:
\begin{equation}
\min_{} \mathcal{L}_{} = \mathcal{L}_{\operatorname{ce}} + \gamma \mathcal{L}_{\operatorname{reg}} = \sum_{i\in \mathcal{Y}_{L}} \sum_{j=1}^{|\mathcal{C}|} \mathbf{Y}_{ij} \ln \mathbf{Z}_{ij} + \gamma \mathcal{L}_{\operatorname{reg}},
\label{equation9}
\end{equation} 
where $\gamma$ is a balanced parameter and $\mathbf{Z}=\mathcal{F}_{\mathbf{\Theta}}(\mathbf{X}, \widehat{\mathbf{A}})=\operatorname{softmax}\left(\widehat{\mathbf{A}} \sigma \left(\widehat{\mathbf{A}} \mathbf{X} \mathbf{\Theta}^{0}\right) \mathbf{\Theta}^{1}\right)$.
According to different LGI methods, the graph regularization loss $\mathcal{L}_{\operatorname{reg}}$ can be different, such as Dirichlet energy \cite{IDGL} and self-supervision \cite{SLAPS} (see supplementary material for more details). 
For fairness, in the experiments, we will adopt the same graph regularization loss as the comparison methods.

\section{Experiments}
\label{experi}

\begin{table}[!tb]
	\scriptsize  
	\centering
	\setlength{\tabcolsep}{2.4mm}
	\caption{Test accuracy (\%) of the baselines (M) and our CUR extension versions (M\_U and M\_R) on various datasets with different labeling rates (marked in \textbf{bold}), where ``OOM'' indicates out of memory. The highest and second highest results are marked in \textcolor{red}{red} and \textcolor{blue}{blue}, respectively. }
	\begin{tabular}{l|cccccc}
		\hline
		Models / datasets            
		& ogbn-arxiv      & Cora390     		& Cora140         
		& Citeseer370     & Citeseer120         & Pubmed                     
		\\
		\hline
		$\#$ of labeled / all nodes 
		& 90941/169343    & 390/2708          & 140/2708        
		& 370/3327        & 120/3327          & 60/19717         \\
		
		\textbf{Labeling rate}
		& \textbf{{53.70\%}}   
		& \textbf{{14.40\%}}           
		& \textbf{{5.17\%}}          
		& \textbf{{11.12\%}}           
		& \textbf{{3.61\%}}    		
		& \textbf{{0.30\%}}      	\\
		\hline
		
		GCN+KNN 
		& 55.15 $\pm$ 0.11     & 72.82 $\pm$ 0.39     & 67.94 $\pm$ 0.29     
		& 73.28 $\pm$ 0.23     & 69.68 $\pm$ 0.53     & 68.66 $\pm$ 0.05     \\
		
		GCN+KNN{\_U} (ours)  
		& \textcolor{blue}{55.82 $\pm$ 0.11}    & \textcolor{blue}{72.82 $\pm$ 0.21}    & \textcolor{red}{68.18 $\pm$ 0.44}     
		& \textcolor{red}{73.68 $\pm$ 0.10}     & \textcolor{blue}{69.74 $\pm$ 0.54}    & \textcolor{blue}{74.12 $\pm$ 0.32}    \\
		
		GCN+KNN{\_R} (ours)
		& \textcolor{red}{55.86 $\pm$ 0.10}  	& \textcolor{red}{72.92 $\pm$ 0.28}     & \textcolor{blue}{68.12 $\pm$ 0.48}    
		& \textcolor{blue}{73.66 $\pm$ 0.14}    & \textcolor{red}{69.90 $\pm$ 0.68}     & \textcolor{red}{74.78 $\pm$ 0.17}     \\
		\hdashline

		GCN\&KNN 
		& OOM         & 72.16 $\pm$ 0.54      & 68.76 $\pm$ 1.20      
		& 77.28 $\pm$ 0.64      & 68.64 $\pm$ 1.14      & OOM  \\
		
		GCN\&KNN{\_U} (ours)    
		& OOM        & \textcolor{blue}{73.04 $\pm$ 0.20}    & \textcolor{blue}{70.16 $\pm$ 0.91}    
		& \textcolor{blue}{78.40 $\pm$ 0.44}    & \textcolor{red}{70.52 $\pm$ 1.04}    & OOM  \\
		
		GCN\&KNN{\_R} (ours) 
		& OOM      & \textcolor{red}{73.20 $\pm$ 0.25}     & \textcolor{red}{70.24 $\pm$ 0.97}     & \textcolor{red}{78.48 $\pm$ 0.30}     
		& \textcolor{blue}{69.48 $\pm$ 0.77}      & OOM  \\
		\hdashline

		IDGL  \cite{IDGL}  
		& OOM & 74.00 $\pm$ 0.38  & 70.74 $\pm$ 0.50 & 71.30 $\pm$ 0.17 & 69.24 $\pm$ 0.19 
		& OOM  \\
		
		IDGL{\_U} (ours)    
		& OOM  & \textcolor{red}{74.54 $\pm$ 0.52}     & \textcolor{blue}{70.82 $\pm$ 0.49}    & \textcolor{blue}{72.46 $\pm$ 0.14}    
		& \textcolor{blue}{69.32 $\pm$ 0.39}   & OOM  \\
		
		IDGL{\_R} (ours)
		& OOM      & \textcolor{blue}{74.48 $\pm$ 0.47}    & \textcolor{red}{71.14 $\pm$ 0.22}     & \textcolor{red}{72.56 $\pm$ 0.12}     
		& \textcolor{red}{69.86 $\pm$ 0.50}   & OOM   \\
		\hdashline

		LCGS  \cite{LCGS}  
		&  OOM   
		& 72.02 $\pm$ 0.37   & 69.88 $\pm$ 0.66  & 73.84 $\pm$ 0.83  & 72.30 $\pm$ 0.33        & OOM  \\
		
		LCGS{\_U} (ours)  
		&  OOM   
		& \textcolor{blue}{72.18 $\pm$ 0.31}  & \textcolor{blue}{70.04 $\pm$ 0.80}  & \textcolor{blue}{74.18 $\pm$ 0.43}  &  \textcolor{blue}{72.38 $\pm$ 0.43}        & OOM  \\
		
		LCGS{\_R} (ours)           
		& OOM   
		& \textcolor{red}{72.22 $\pm$ 0.45}  & \textcolor{red}{70.14 $\pm$ 0.64}  & \textcolor{red}{74.20 $\pm$ 0.36}  & \textcolor{red}{72.40 $\pm$ 0.42}        &  OOM   \\
		\hdashline
		
		GRCN  \cite{GRCN} 
		& OOM   & 73.34 $\pm$ 0.27  & 68.86 $\pm$ 0.25  & 73.62 $\pm$ 0.23  & 71.24 $\pm$ 0.19    	& 69.24 $\pm$ 0.20   \\
		
		GRCN{\_U} (ours)    
		& OOM    & \textcolor{blue}{74.10 $\pm$ 0.25}    & \textcolor{blue}{69.44 $\pm$ 0.34}    & \textcolor{blue}{73.88 $\pm$ 0.34}    & \textcolor{blue}{71.54 $\pm$ 0.31}    & \textcolor{blue}{72.80 $\pm$ 0.99}      \\
		
		GRCN{\_R} (ours)
		& OOM   	& \textcolor{red}{74.14 $\pm$ 0.22}     & \textcolor{red}{69.56 $\pm$ 0.22}     & \textcolor{red}{74.22 $\pm$ 0.13}     & \textcolor{red}{71.64 $\pm$ 0.41}     	& \textcolor{red}{72.82 $\pm$ 1.03}     \\
		\hdashline
		
		SLAPS  \cite{SLAPS}                 
		& 55.46 $\pm$ 0.12  	& 76.62 $\pm$ 0.83      & 74.26 $\pm$ 0.53      
		& 74.32 $\pm$ 0.56      & 70.66 $\pm$ 0.97      & 74.86 $\pm$ 0.79      \\
		
		SLAPS{\_U} (ours)    
		&   \textcolor{blue}{55.68 $\pm$ 0.09}     & \textcolor{red}{76.94 $\pm$ 0.42}     &\textcolor{blue}{74.56 $\pm$ 0.21}     & \textcolor{blue}{74.82 $\pm$ 0.27}    
		& \textcolor{blue}{71.68 $\pm$ 0.47}     		& \textcolor{blue}{76.74 $\pm$ 0.59}    \\
		
		SLAPS{\_R} (ours)              
		& \textcolor{red}{56.11 $\pm$ 0.15}   & \textcolor{blue}{76.82 $\pm$ 0.19}    & \textcolor{red}{75.00 $\pm$ 0.49}    & \textcolor{red}{74.90 $\pm$ 0.42}     
		& \textcolor{red}{72.36 $\pm$ 0.49}      & \textcolor{red}{77.12 $\pm$ 0.77}     \\
		\hline
	\end{tabular}
	\label{Table1}
\end{table}

\subsection{Experimental Settings}

\textbf{Baselines.} 
As mentioned earlier, the proposed regularization module can be easily integrated into most existing LGI methods. 
To evaluate its effectiveness, we select representative LGI methods as baselines, including IDGL~\cite{IDGL}, GRCN~\cite{GRCN}, SLAPS~\cite{SLAPS}, and LCGS~\cite{LCGS}. 
We also consider two additional baselines marked as GCN+KNN~\cite{kipficlr, SLAPS} and GCN\&KNN~\cite{kipficlr, IDGL}. 
GCN+KNN is a two-step method that first constructs a KNN graph based on feature similarities and then feeds the pre-constructed graph to a GCN for training. 
GCN\&KNN is an end-to-end method that learns the latent graph and network parameters simultaneously. 
For methods that require a prior graph, we use the same KNN graph as in GCN+KNN. 
For GCN\&KNN, we simply adopt the Dirichlet energy~\cite{IDGL} as the graph regularization loss.

\textbf{Datasets.} Following the common settings of existing LGI methods~\cite{GRCN, IDGL, SLAPS, LCGS}, we conduct experiments on four well-known benchmarks: Cora, Citeseer, Pubmed~\cite{Cora, kipficlr}, and ogbn-arxiv~\cite{ogbn}.
For detailed dataset statistics, please refer to the supplementary material. 
For all datasets, we only provide the original node features for training. 
To test the performance under different labeling rates, for the Cora and Citeseer datasets, we add half of the validation samples to the training sets, resulting in Cora390 and Citeseer370, where the suffix number represents the total number of labeled nodes. 

\textbf{Implementation.} We compare the above baselines with their corresponding CUR extension versions. 
Specifically, we consider two CUR extensions termed {M\_U} and {M\_R} (M refers to the baseline), 
where the former adopts the sparse intersection matrix $\widehat{\mathbf{U}}$ as the regularization, and the latter combines $\widehat{\mathbf{U}}$ and $\mathbf{Q}$ together (see Sec. \ref{WCRD} for details). 
In experiments, we practically select the $\tau$ closest labeled nodes as supplementary adjacent points for each row of $\widehat{\mathbf{U}}$ and $\mathbf{Q}$. 
For postprocessing operations on the graph, such as symmetrization and normalization~\cite{IDGL, SLAPS}, we follow the baselines and adopt the same operations for fairness. 
We select the values of $\tau$ and $\alpha$ from the sets $\{10, 15, 20, 25, 30, 50\}$ and $\{0.01, 0.1, 1.0, 10, 50, 100\}$, respectively. 
For other hyperparameters such as learning rate and weight decay, we follow the baselines and use the same settings. 
For each method, we record the best testing performance and report the average accuracy of five independent experiments, along with the corresponding standard deviation.

\begin{figure}
	\centering
	\subfigure[GCN+KNN]{\includegraphics[scale=0.28,trim=0 0 0 0,clip]{./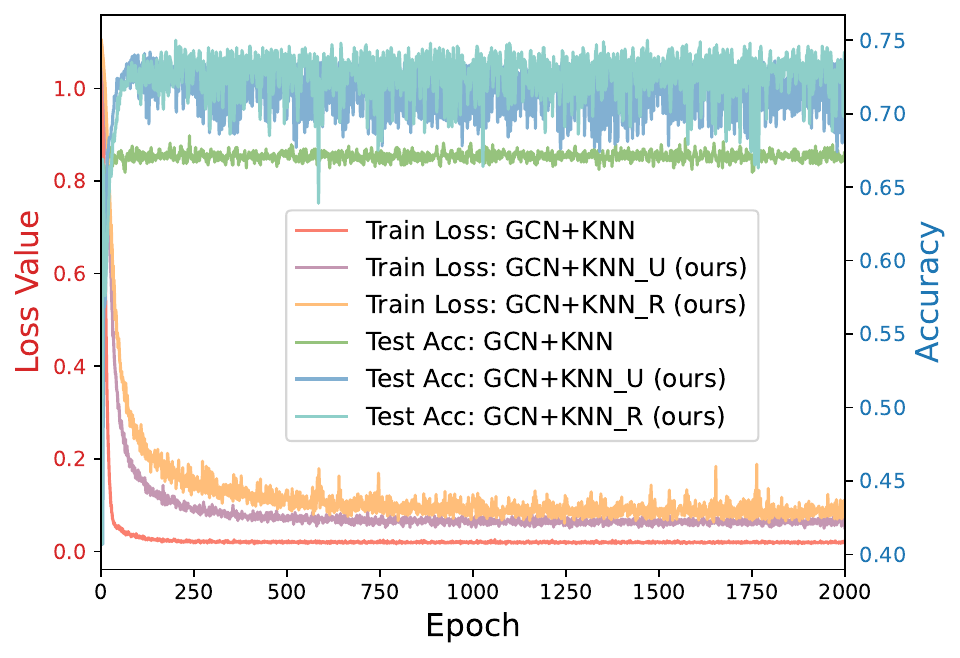}}
	\subfigure[GRCN]{\includegraphics[scale=0.28,trim=0 0 0 0,clip]{./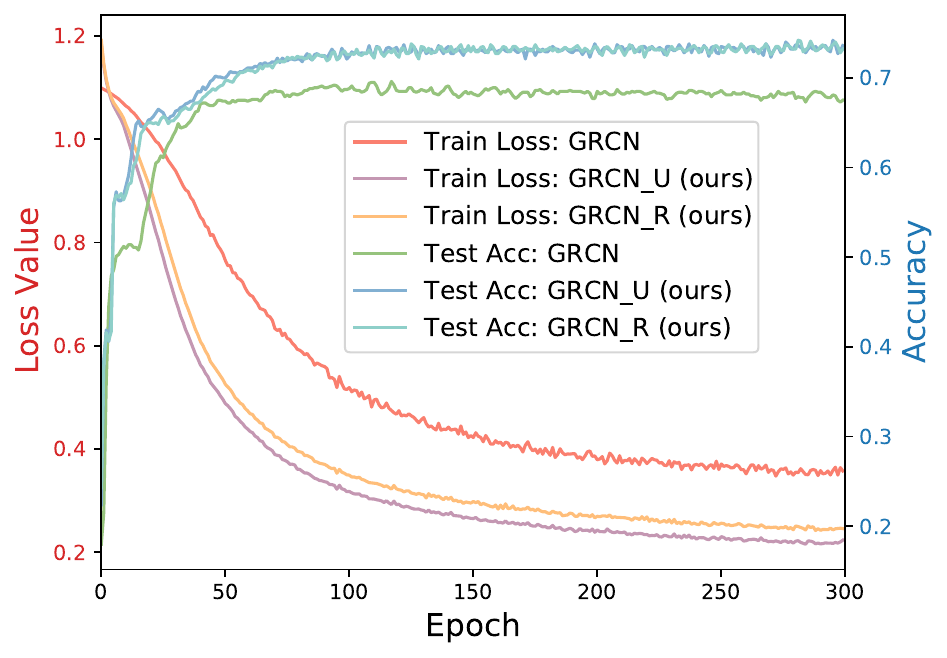}}
	\subfigure[SLAPS]{\includegraphics[scale=0.28,trim=0 0 0 0,clip]{./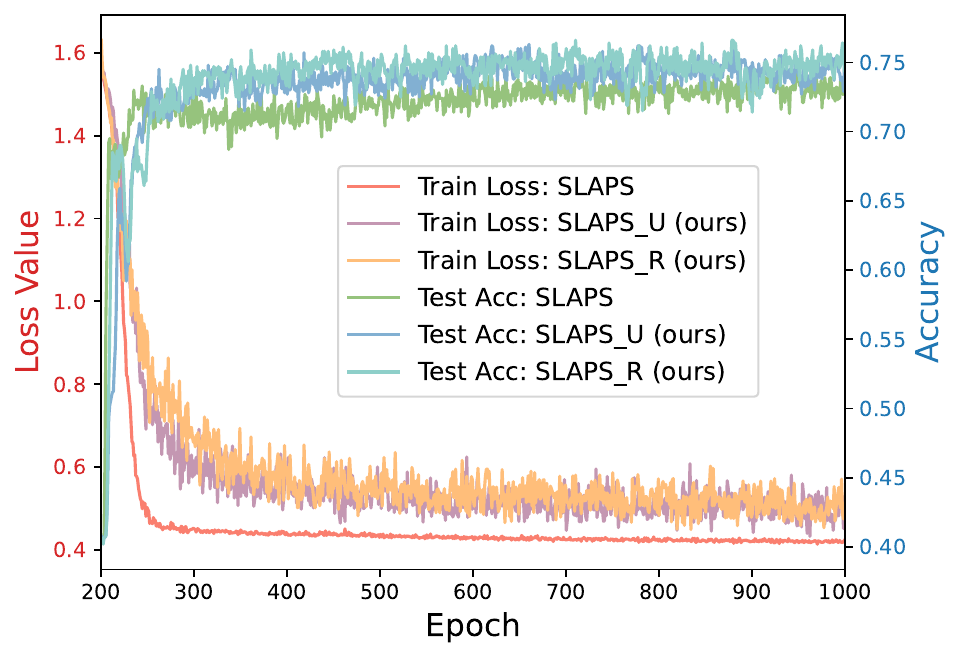}}
	\caption{\textcolor{black}{Training loss} (left vertical axis) and \textcolor{black}{testing accuracy} (right vertical axis) curves of GCN+KNN, GRCN, SLAPS, and their corresponding CUR extensions on the Pubmed dataset. }
	\label{fig2}
\end{figure}

\begin{table}[!tb]  
	\centering
        \small
	\setlength{\tabcolsep}{2.4mm}
	\caption{Number of $k$-hop starved nodes ($k\in\{1, 2\}$) on various datasets when selecting different number of neighbors ($\tau$) in graph sparsification operation. }
	\begin{tabular}{l|rrr|rrr}
		\hline
		Number of neighbors ($\tau$)
		& \multicolumn{3}{c|}{$10$}   & \multicolumn{3}{c}{$20$}    \\
		\hline
		Datasets           
		& Cora140  & Citeseer120  & Pubmed  & Cora140   & Citeseer120  & Pubmed    \\
		\hline
		
		$1$-hop starved nodes 
		& 1,625   & 2,268  & 19,076  & 993  & 1,577  & 18,478   \\
		
		$2$-hop starved nodes 
		& 100   & 299  & 16,756      & 0  & 0  & 12,843   \\
		\hline
	\end{tabular}
	\label{Table5}
\end{table}

\subsection{Comparison Results}
Table \ref{Table1} presents the comparison results on all used datasets. 
It is evident that our proposed CUR extensions consistently outperform the corresponding baselines, demonstrating the effectiveness of eliminating starved nodes. 
When considering the labeling rates of different datasets listed in the third row of Table \ref{Table1}, we observe that the lower the labeling rate, the greater the performance improvement achieved by our methods. 
Notably, on the Pubmed dataset with an extremely low labeling rate of $0.30\%$, the accuracy of our proposed methods (M\_R) increase by $6.12\%$, $3.58\%$, and $2.26\%$ compared to the basic models (M) of GCN+KNN, GRCN, and SLAPS, respectively. 
This is because the lower the labeling rate, the more starved nodes exist in the dataset (see Fig. \ref{fig1} for an example).
Our proposed methods aim to restore the destroyed affinities between starved nodes and labeled nodes, enabling us to leverage the semantic supervision missed by baselines, thereby achieving superior performance with extremely limited supervision.

Fig. \ref{fig2} displays the training loss and testing accuracy curves of GCN+KNN, GRCN, SLAPS, and their CUR extensions on the Pubmed dataset.
We omit the curves for the first 200 epochs of SLAPS because, during this stage, SLAPS focuses solely on learning the latent graph through an additional self-supervision task without involving the classification task. 
From Fig. \ref{fig2}, we observe that our proposed CUR extensions achieve higher testing accuracy compared to their corresponding baselines. 
In comparison with the CUR extensions of GCN+KNN, the CUR extensions of GRCN and SLAPS exhibit relatively stable accuracy results as the training epoch increases. 
This stability can be attributed to GRCN and SLAPS jointly learning the latent graph and network parameters in an end-to-end manner, whereas GCN+KNN pre-constructs a KNN graph and maintains a fixed graph during training without optimization. 
For GRCN, our proposed CUR extensions demonstrate lower training loss and higher testing accuracy, further indicating their improved generalization by removing starved nodes. 
Additionally, we observe relatively unstable training loss for the CUR extensions of SLAPS. 
The potential reason is that SLAPS introduces an additional graph regularization loss through a self-supervision task, while its CUR extensions aim to reconstruct the destroyed connections in the latent graph. 
As a result, the training process for the self-supervision task can exhibit some instability.

\subsection{Discussion}

We would like to explore the question that \textit{why our proposed methods yield a slight improvement on the Cora and Citeseer datasets, while achieving a substantial improvement on the Pubmed dataset}. 
As shown in Table \ref{Table5}, when $\tau=10$, there are $100$ and $299$ $2$-hop starved nodes on the Cora140 and Citeseer120 datasets, respectively. 
However, when $\tau=20$, there are no $2$-hop starved nodes on the Cora140 and Citeseer120 datasets, and the number of $1$-hop starved nodes also sharply decreases. 
In this scenario, since the comparison baselines all utilize a $2$-layer GNN, they are unaffected by $2$-hop starved nodes and only minimally affected by $1$-hop starved nodes. 
Consequently, our methods result in only slight improvements over the baselines on the Cora140 and Citeseer120 datasets.
On the other hand, when $\tau=20$, there are still $12,843$ $2$-hop starved nodes present on the Pubmed dataset. 
Since we effectively eliminate these starved nodes without requiring additional graph convolutional layers, our methods can provide notable benefits on this dataset with an extremely low labeling rate.

\subsection{Ablation Study}
In this subsection, we aim to explore and answer the following questions. 

\textit{How many starved nodes should be eliminated?}
The results shown in Table \ref{Table1} indicate that eliminating all starved nodes contributes to the performance improvement of the baselines.
It is important to understand the relationship between the number of starved nodes removed and the corresponding performance improvement of the baselines. 
To investigate this, we randomly remain $10\%, 20\%, 40\%, 60\%, 80\%$ starved nodes and evaluate the performance of GCN+CNN, GRCN, and SLAPS accordingly. 
The results on the Pubmed dataset are summarized in Table \ref{Table2}. 
We find that, for GCN+KNN, a smaller number of starved nodes leads to higher testing accuracy.
For GRCN and SLAPS, however, we need to check the number of starved nodes to obtain optimal performance.

\textit{How $\tau$ affects the performance.} 
Table \ref{Table3} shows that the selection of $\tau$ slightly differentiates the performance of our proposed methods. 
In general, a larger value of $\tau$ leads to a higher improvement in performance. 
However, when $\tau$ is set too large, such as $50$, the performance starts to degrade. 
This degradation occurs due to the introduction of incorrect labels when $\tau$ exceeds a certain threshold. 

\textit{How $\alpha$ affects the performance.} 
Table \ref{Table4} presents the sensitivity of parameter $\alpha$ on the Pubmed dataset.
It is observed that a relatively larger value of $\alpha$, such as 10 or 50, leads to a significant improvement in performance. This finding further emphasizes the effectiveness of our proposed regularization methods in enhancing the performance of existing LGI models.

\begin{table}[!tb]
	\scriptsize  
	\centering
	\setlength{\tabcolsep}{1.8mm}
	\caption{Test accuracy (\%) of our proposed CUR extensions for GCN+CNN, GRCN, and SLAPS when eliminating different number of staved nodes on the Pubmed dataset. }
	\begin{tabular}{l|ccccccc}
		\hline
		Staved nodes (\%) & 100 & 80   & 60    & 40     & 20      & 10         & 0               \\
		\hline
		GCN+KNN\_U    & 68.66 $\pm$ 0.05    
		& 69.42 $\pm$ 0.12      & 70.20 $\pm$ 0.22     & 71.52 $\pm$ 0.15      
		& 72.40 $\pm$ 0.27      & \textcolor{blue}{73.04 $\pm$ 0.29}    & \textcolor{red}{74.12 $\pm$ 0.32}         \\
		
		GCN+KNN\_R    & 69.12 $\pm$ 0.33    
		& 69.46 $\pm$ 0.22      & 70.46 $\pm$ 0.19     & 71.90 $\pm$ 0.15      
		& 73.24 $\pm$ 0.05      & \textcolor{blue}{74.00 $\pm$ 0.14}    & \textcolor{red}{74.78 $\pm$ 0.17}       \\
		\hdashline
		
		GRCN\_U   & 69.24 $\pm$ 0.20   
		& 72.14 $\pm$ 0.35       & 72.48 $\pm$ 0.43      & \textcolor{red}{73.04 $\pm$ 0.52}       
		& 72.82 $\pm$ 0.75      & \textcolor{blue}{72.96 $\pm$ 0.87}     & {72.80 $\pm$ 0.99}       \\
		
		GRCN\_R     & 70.20 $\pm$ 0.06
		& 72.34 $\pm$ 0.30       & 72.54 $\pm$ 0.48      & \textcolor{red}{72.98 $\pm$ 0.37}      
		& 72.80 $\pm$ 0.70       & 72.80 $\pm$ 0.74      & \textcolor{blue}{72.82 $\pm$ 1.03}        \\
		\hdashline
		
		SLAPS\_U  & 74.86 $\pm$ 0.79  
		& 76.26 $\pm$ 0.62  & \textcolor{blue}{76.48 $\pm$ 0.61}  & 76.36 $\pm$ 0.39
		& 76.48 $\pm$ 0.67  & 76.32 $\pm$ 0.71 & \textcolor{red}{76.74 $\pm$ 0.59} \\
		
		SLAPS\_R  & 75.64 $\pm$ 0.45  
		& 76.44 $\pm$ 1.27 & 76.52 $\pm$ 0.18 & 76.50 $\pm$ 1.22
		& 76.38 $\pm$ 0.45  & \textcolor{blue}{76.70 $\pm$ 0.59} & \textcolor{red}{77.12 $\pm$ 0.77} \\
		\hline
	\end{tabular}
	\label{Table2}
\end{table}

\begin{table}[!tb]
	\scriptsize  
	\centering
	\setlength{\tabcolsep}{2.0mm}
	\caption{Parameter sensitivity of $\tau$ when applying our proposed method to GCN+CNN, GRCN, and SLAPS on the Pubmed dataset. The baseline results indicate the accuracy of the original methods.}
	\begin{tabular}{l|ccccccc}
		\hline
		The value of $\tau$ & baseline (0) & 10   & 15    & 20     & 25      & 30         & 50               \\
		\hline
		GCN+KNN\_U    & 68.66 $\pm$ 0.05    
		& 71.86 $\pm$ 0.26      & 72.92 $\pm$ 0.17    & 73.50 $\pm$ 0.31      
		& \textcolor{blue}{73.88 $\pm$ 0.28}      & \textcolor{red}{74.12 $\pm$ 0.32}    & 73.76 $\pm$ 0.29      \\
		
		GCN+KNN\_R    & 68.66 $\pm$ 0.05    
		& 73.10 $\pm$ 0.09     & 73.90 $\pm$ 0.14    & 74.02 $\pm$ 0.10      
		& 74.56 $\pm$ 0.12      & \textcolor{red}{74.78 $\pm$ 0.17}    & \textcolor{blue}{74.62 $\pm$ 0.12}       \\
		\hdashline
		
		GRCN\_U   & 69.24 $\pm$ 0.20   
		& 71.92 $\pm$ 0.87    & {72.56 $\pm$ 0.77}  & \textcolor{blue}{72.64 $\pm$ 1.03} 
		&  \textcolor{red}{72.80 $\pm$ 0.99} & {72.56 $\pm$ 1.02} & 71.80 $\pm$ 1.13    \\
		
		GRCN\_R     & 69.24 $\pm$ 0.20  
		& 72.12 $\pm$ 0.86      & 72.54 $\pm$ 0.77     & \textcolor{blue}{72.80 $\pm$ 1.05}     
		& \textcolor{red}{72.82 $\pm$ 1.03}      & 72.44 $\pm$ 1.07     & 71.86 $\pm$ 1.11           \\
		\hdashline
		
		SLAPS\_U  & 74.86 $\pm$ 0.79  
		& 75.98 $\pm$ 1.12 & 76.20 $\pm$ 0.87  & 75.58 $\pm$ 0.90
		& \textcolor{blue}{76.28 $\pm$ 0.61}  & \textcolor{red}{76.74 $\pm$ 0.59} & 75.98 $\pm$ 0.89 \\
		
		SLAPS\_R  & 74.86 $\pm$ 0.79 
		& 76.00 $\pm$ 1.17  & 76.48 $\pm$ 0.69  & 76.40 $\pm$ 0.48 
		& \textcolor{red}{77.12 $\pm$ 0.77} & \textcolor{blue}{76.58 $\pm$ 0.33} & 76.50 $\pm$ 0.59 \\
		\hline
	\end{tabular}
	\label{Table3}
\end{table}

\begin{table}[!tb]
	\scriptsize  
	\centering
	\setlength{\tabcolsep}{2.0mm}
	\caption{Parameter sensitivity of $\alpha$ when applying our proposed method to GCN+CNN, GRCN, and SLAPS on the Pubmed dataset. The baseline results indicate the accuracy of the original methods.}
	\begin{tabular}{l|ccccccc}
		\hline
		The value of $\alpha$ & baseline (0) & 0.01   & 0.1        & 1.0      & 10         & 50    & 100           \\
		\hline
		GCN+KNN\_U    & 68.66 $\pm$ 0.05    
		& 68.04 $\pm$ 0.05      & 67.96 $\pm$ 0.08       & 68.14 $\pm$ 0.10       
		& 69.92 $\pm$ 0.12      & \textcolor{blue}{72.70 $\pm$ 0.06}    &  \textcolor{red}{74.12 $\pm$ 0.32}  \\
		
		GCN+KNN\_R    & 68.66 $\pm$ 0.05    
		& 67.96 $\pm$ 0.10       & 67.90 $\pm$ 0.11      
		& 68.18 $\pm$ 0.07      & 70.04 $\pm$ 0.16      & \textcolor{blue}{73.22 $\pm$ 0.07}    & \textcolor{red}{74.78 $\pm$ 0.17}  \\
		\hdashline
		
		GRCN\_U   & 69.24 $\pm$ 0.20   
		& 69.24 $\pm$ 0.10       & 69.72 $\pm$  0.19    
		& \textcolor{blue}{71.92 $\pm$ 0.24}       & \textcolor{red}{72.80 $\pm$  0.99}       & 67.22 $\pm$ 3.93    & 59.44 $\pm$ 6.74   \\
		
		GRCN\_R     & 69.24 $\pm$ 0.20  
		& 69.24 $\pm$ 0.10     & 69.66 $\pm$ 0.22    
		& \textcolor{blue}{71.94 $\pm$ 0.22}     & \textcolor{red}{72.82 $\pm$ 1.03}     & 68.98 $\pm$ 4.05   & 60.34 $\pm$ 6.50      \\
		\hdashline
		
		SLAPS\_U  & 74.86 $\pm$ 0.79  
		& 74.88 $\pm$ 0.90  & 74.48 $\pm$ 0.72  & 74.94 $\pm$ 0.81
		& \textcolor{blue}{76.26 $\pm$ 0.80} & \textcolor{red}{76.74 $\pm$ 0.59} & 76.08 $\pm$ 0.97 \\
		
		SLAPS\_R  & 74.86 $\pm$ 0.79  
		& 74.62 $\pm$ 1.51  & 74.32 $\pm$ 0.83  & 74.76 $\pm$ 0.71
		& 76.22 $\pm$ 0.55 & \textcolor{red}{77.12 $\pm$ 0.77} & \textcolor{blue}{76.74 $\pm$ 0.68} \\
		
		\hline
	\end{tabular}
	\label{Table4}
\end{table}

\section{Related Work}

\textbf{Latent Graph Inference.}
Given only the node features of data, latent graph inference (LGI) aims to simultaneously learn the underlying graph structure and discriminative node representations from the features of data~\cite{SLAPS, wang2022robust, LiuCSJ22}. 
For example, Jiang \textit{et al.}~\cite{Jiang_2019_CVPR} propose to infer the graph structure by combining graph learning and graph convolution in a unified framework. 
Yang \textit{et al.}~\cite{TOGCN} model the topology refinement as a label propagation process. 
Jin \textit{et al.}~\cite{JinWeiKDD20} explore some intrinsic properties of the latent graph and propose a robust LGI framework to defend adversarial attacks on graphs. 
Though effective, these methods require a prior graph to guide the graph inference process. 
Controversially, some methods have been proposed to directly infer an optimal graph from the data. 
For example, Franceschi\textit{et al.}~\cite{LDS} regard the LGI problem as a bilevel program task and learn a discrete probability distribution on the edges of the latent graph.
Norcliffe-Brown \textit{et al.}~\cite{GSLVQA} focus on visual question answering task and propose to learn an adjacency matrix from image objects so that each edge is conditioned on the questions. 
Fatemi \textit{et al.}~\cite{SLAPS} propose a self-supervision guided LGI method, called SLAPS, which yields supplementary supervision from node features through an additional self-supervision task. 
Our method is totally different from SLAPS as we provide supplementary supervision directly from the true labels. 
More importantly, our method is model-agnostic and can be easily integrated into most existing LGI methods.

\textbf{CUR Matrix Decomposition.}
The CUR decomposition~\cite{CUR2014, CURsiam} of a matrix $\mathbf{Q} \in \mathbb{R}^{n\times m}$ aims to find a column matrix $\mathbf{C} \in \mathbb{R}^{n\times c}$ with a subset of $c < m$ columns of $\mathbf{Q}$, and a row matrix $\mathbf{R} \in \mathbb{R}^{r\times m}$ with a subset of $r < n$ rows of $\mathbf{Q}$, as well as an intersection matrix $\mathbf{U} \in \mathbb{R}^{c\times r}$ such that the matrix multiplication of $\mathbf{CUR}$ approximates $\mathbf{Q}$. 
Unlike the SVD decomposition of $\mathbf{Q}$, the CUR decomposition obtains actual columns and rows of $\mathbf{Q}$, which makes it useful in various applications~\cite{CURpami, CURDataAnalysis}. 
In our method, instead of seeking an optimal matrix approximation, we employ CUR decomposition to identify and eliminate the starved nodes. The extracted intersection matrix $\mathbf{U}$ is then reconstructed and served as a regularization term to provide supplementary supervision for better latent graph inference. 
To the best of our knowledge, we are the first to introduce CUR matrix decomposition into the field of graph neural networks.

\section{Conclusion}

In this paper, we analyze the common problem of supervision starvation (SS) in existing latent graph inference (LGI) methods. 
Our analysis reveals that this problem arises due to the graph sparsification operation, which destroys numerous important connections between pivotal nodes and labeled ones. 
Building upon this observation, we propose to recover the corrupted connections and replenish the missed supervision for improved graph inference. 
To this end, we begin by defining $k$-hop starved nodes and transform the SS problem into a more manageable task of reducing starved nodes. 
Then, we present two simple yet effective solutions to identify the starved nodes, including a more efficient method inspired by CUR matrix decomposition. 
Subsequently, we eliminate the starved nodes by constructing and incorporating a regularization graph. 
In addition, we propose two straightforward strategies to tackle the potential issue known as weight contribution rate decay. 
Extensive experiments conducted on representative benchmarks demonstrate that our proposed methods consistently enhance the performance of state-of-the-art LGI models, particularly under extremely limited supervision.

\section{Acknowledgments and Disclosure of Funding}

We are very grateful to Bahare Fatemi for her valuable discussion of our work.
We thank the anonymous NeurIPS reviewers for providing us with constructive suggestions to improve our paper. 
This material is based upon work supported by the Air Force Office of Scientific Research under award number FA9550-23-1-0290.

\bibliography{references}
\bibliographystyle{plain}

\clearpage

\section*{Appendix}
\section*{A Proof of Theorem}
\subsection*{A.1 Proof of Theorem 1}

\begin{proof}
To simplify the proof, let us assume that $\mathbf{A}_{} = \mathbbm{1}_{\mathbb{R}^+}(\mathbf{A})_{} \in \{0, 1\}^{n\times n}$\footnote{Note that, applying the function $\mathbbm{1}_{\mathbb{R}^+}(\cdot)$ to $\mathbf{A}$ does not affect the proof process.}.
Consider node $V_i$. 
If $\mathbf{A}_{ij}=1$, it implies that node $V_j$ is one of the $1$-hop neighbors of $V_i$.   	
Now, if $(\mathbf{A}^2)_{ij} = \mathbf{A}_{i:} \cdot \mathbf{A}_{:j} =1$, it means that there exists an $1$-hop neighbor of $V_i$ that is also the $1$-hop neighbor of $V_j$, indicating that node $V_j$ is one of the $2$-hop neighbors of $V_i$.  
Similarly, if $(\mathbf{A}^3)_{ij} = (\mathbf{A})^2_{i:} \cdot \mathbf{A}_{:j} =1$, it signifies that there exists a $2$-hop neighbor of $V_i$ that is also the $1$-hop neighbor of $V_j$, implying that node $V_j$ is one of the $3$-hop neighbors of $V_i$.
By extension, if $(\mathbf{A}^k)_{ij} = 1$, it implies that node $V_j$ is the one of $k$-hop neighbors of $V_i$. 
Based on this observation, for $\forall j \in \{j\ |\ (\mathbf{A})_{ij}=1 \cup (\mathbf{A}^2)_{ij}=1 \cup \ldots \cup (\mathbf{A}^k)_{ij}=1  \}$, if node $V_j$ is unlabeled, it indicates that all the $\kappa$-hop neighbors of $V_i$ for $\kappa \in \{1, \ldots, k\}$ are unlabeled nodes. Therefore, according to Definition 1, node $V_i$ qualifies as a $k$-hop starved node. 
\end{proof}

\begin{wrapfigure}{r}{5.5cm}
	\centering
	\includegraphics[scale=0.45,trim=-20 375 630 10,clip]{./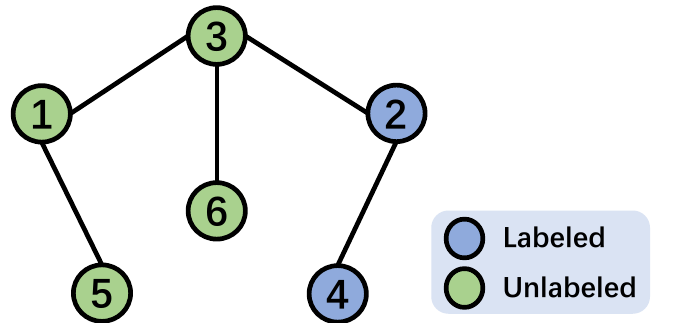}
	\caption{A simple graph consisting of 6 nodes with 2 labeled and 4 unlabeled.}
	\label{Afig1}
\end{wrapfigure}

\textbf{Illustration of Theorem 1.} 
To provide a clearer understanding, we present an example to illustrate the process of identifying $k$-hop starved nodes based on the given adjacency matrix. 
Fig. \ref{Afig1} depicts a graph consisting of $6$ nodes, with $2$ labeled and $4$ unlabeled. 
The edges between the nodes are shown in the figure, with all edge weights set to $1$ for simplicity. 
To identify the $k$-hop starved nodes, we need to determine the $k$-hop neighbors for each node. The steps below demonstrate the identification process of $k$-hop neighbors based on the given graph in Fig. \ref{Afig1}, where the $k$-hop neighbors for each node are listed at the end of each row of the corresponding matrices:

\begin{equation}   
\text{Identifying $1$-hop neighbors based on }\mathbf{A}: \begin{array}{lll}
	& \begin{array}{llllll}\ \  1 & 2 & 3 & 4 & 5 & 6 \end{array} & \\
	\begin{array}{l}  1 \\ 2 \\ 3 \\ 4 \\ 5 \\ 6 \end{array}&
	\left[\begin{array}{llllll}
		1 & 0 & 1 & 0 & 1 & 0 \\  
		0 & 1 & 1 & 1 & 0 & 0 \\ 
		1 & 1 & 1 & 0 & 0 & 1 \\ 
		0 & 1 & 0 & 1 & 0 & 0 \\ 
		1 & 0 & 0 & 0 & 1 & 0 \\ 
		0 & 0 & 1 & 0 & 0 & 1 \\ 
	\end{array}\right]
	\begin{array}{l}  (V_3, V_5) \\ (V_3, V_4) \\ (V_1,V_2,V_6) \\ (V_2) \\ (V_1) \\ (V_3) \end{array}
\end{array} \nonumber
\end{equation}
Since nodes $V_2$ and $V_4$ are labeled, we identify the $1$-hop starved nodes as $\{V_1, V_5, V_6\}$.  

\begin{equation}   
\text{Identifying $2$-hop neighbors based on }\mathbf{A}^2: \begin{array}{lll}
& \begin{array}{llllll}\ \  1 & 2 & 3 & 4 & 5 & 6 \end{array} & \\
	\begin{array}{l}  1 \\ 2 \\ 3 \\ 4 \\ 5 \\ 6 \end{array}&
\left[\begin{array}{llllll}
3 & 1 & 2 & 0 & 2 & 1 \\  
1 & 3 & 2 & 2 & 0 & 1 \\ 
2 & 2 & 4 & 1 & 1 & 2 \\ 
0 & 2 & 1 & 2 & 0 & 0 \\ 
2 & 0 & 1 & 0 & 2 & 0 \\ 
1 & 1 & 2 & 0 & 0 & 2 \\ 
\end{array}\right]
\begin{array}{l}  (V_2,V_6) \\ (V_1,V_6) \\ (V_4,V_5) \\ (V_3) \\ (V_3) \\ (V_1,V_2) \end{array}
\end{array} \nonumber
\end{equation}
Now, we can identify $2$-hop starved nodes from the set $\{V_1, V_5, V_6\}$ as $\{V_5\}$.    

\begin{equation}   
\text{Identifying $3$-hop neighbors based on }\mathbf{A}^{3}: \begin{array}{lll}
& \begin{array}{llllll}\ \  1 & 2 & 3 & \ \ 4 & 5 & 6 \end{array} & \\
	\begin{array}{l}  1 \\ 2 \\ 3 \\ 4 \\ 5 \\ 6 \end{array}&
\left[\begin{array}{llllll}
7 & 3 & 7 & 1 & 5 & 3 \\  
3 & 7 & 7 & 5 & 1 & 3 \\ 
7 & 7 & 10 & 3 & 3 & 6 \\ 
1 & 5 & 3 & 4 & 0 & 1 \\ 
5 & 1 & 3 & 0 & 4 & 1 \\ 
3 & 3 & 6 & 1 & 1 & 4 \\ 
\end{array}\right]
\begin{array}{l}  (V_4) \\ (V_5) \\ (\varnothing) \\ (V_1,V_6) \\ (V_2,V_6) \\ (V_4,V_5) \end{array}
\end{array} \nonumber
\end{equation}
We observe that there are no $3$-hop starved nodes.  

\begin{equation}   
\text{Identifying $4$-hop neighbors based on }\mathbf{A}^4: \begin{array}{lll}
& \begin{array}{llllll}\ \ \  1 & \ \ 2 & \ \ 3 & \ 4 & \ \ \ 5 &\ \  6 \end{array} & \\
	\begin{array}{l}  1 \\ 2 \\ 3 \\ 4 \\ 5 \\ 6 \end{array}&
\left[\begin{array}{llllll}
19 & 11 & 20 & 4 & 12 & 10 \\  
11 & 19 & 20 & 12 & 4 & 10 \\ 
20 & 20 & 30 & 10 & 10 & 16 \\ 
4 & 12 & 10 & 9 & 1 & 4 \\ 
12 & 4 & 10 & 1 & 9 & 4 \\ 
10 & 10 & 16 & 4 & 4 & 10 \\ 
\end{array}\right]
\begin{array}{l}  (\varnothing) \\ (\varnothing) \\ (\varnothing) \\ (V_5) \\ (V_4) \\ (\varnothing) \end{array}
\end{array} \nonumber
\end{equation}
We observe that there are no $4$-hop starved nodes.  

Note that, a node identified as $k$-hop starved node is also considered as a $(k-1)$-hop starved node, as exemplified by node $V_5$. 
Consequently, if there are no $k$-hop starved nodes present, it follows that there are no $(k+1)$-hop starved nodes.

 

\subsection*{A.2 Proof of Theorem 2}

\begin{proof}
Given that $\mathbf{C} = \mathbf{A}[:, col\_mask] \in \mathbb{R}^{n\times c}$, $\mathbf{C}$ models the affinities between all nodes and $c$ labeled nodes. 
Consequently, if $(row\_mask)_i = \mathbbm{1}_{\mathbb{R}^-}(\mathbf{C}\mathbbm{1}_c)_{i}=1$, indicating that $(\mathbf{C}\mathbbm{1}_c)_i = 0$, we can deduce that there are no labeled $1$-hop neighbors of node $V_i$. 
As a result, node $V_i$ belongs to the set of $1$-hop starved nodes, denoted as $\texttt{Set}_1(r) = \{V_i | i \in {\texttt{RM}_+} \} $. 
Additionally, it follows that ${\mathbf{U}} = \mathbf{A}[row\_mask, col\_mask] = \mathbf{0} \in \mathbb{R}^{r\times c}$. 
Considering $\mathbf{R} = \mathbf{A}[row\_mask, :]  \in \mathbb{R}^{r\times n}$, where $\mathbf{R}$ models the affinities between the $r$ $1$-hop starved nodes and all nodes, we can examine each $i \in \texttt{RM}_+$ (where $\texttt{RM}_+$ denotes the indexes of all $1$-hop starved nodes). 
If all $1$-hop neighbors of $V_i$ (\textit{i.e.,} $\forall j$ satisfying $[\mathbbm{1}_{\mathbb{R}^+}(\mathbf{R})]_{ij}=1$) are $1$-hop starved nodes (\textit{i.e.,} $j \in \texttt{RM}_+$), then node $V_i$ qualifies as a $2$-hop starved node.
\end{proof}

\textbf{Illustration of Theorem 2.} To facilitate understanding, we continue using Fig. \ref{Afig1} as an example for illustration. According to Theorem 2, the corresponding $\mathbf{C}, \mathbf{U}, \mathbf{R}$ matrices are as follows:

\begin{equation}   
\mathbf{C} : \begin{array}{lll}
& \begin{array}{ll}\ \ \  2 & 4 \end{array} & \\
\begin{array}{l}  1 \\ 2 \\ 3 \\ 4 \\ 5 \\ 6 \end{array}&
\left[\begin{array}{llll}
 0  & 0  \\  
 1  & 1  \\ 
 1  & 0  \\ 
 1  & 1  \\ 
 0  & 0  \\ 
 0  & 0  \\ 
\end{array}\right]
\end{array}; \quad
\mathbf{R} : \begin{array}{lll}
& \begin{array}{llllll} \ \ 1 & 2 &  3 &  4 &  5 &  6 \end{array} & \\
\begin{array}{l}  1 \\ 5 \\ 6 \end{array}&
\left[\begin{array}{llllll}
1 & 0 & 1 & 0 & 1 & 0 \\  
1 & 0 & 0 & 0 & 1 & 0 \\ 
0 & 0 & 1 & 0 & 0 & 1 \\ 
\end{array}\right]
\end{array}; \quad
\mathbf{U} : \begin{array}{lll}
& \begin{array}{ll} \ \ 2 & 4 \end{array} & \\
\begin{array}{l}  1 \\ 5 \\ 6 \end{array}&
\left[\begin{array}{ll}
0 & 0 \\  
0 & 0 \\ 
0 & 0 \\ 
\end{array}\right]
\end{array}
 \nonumber
\end{equation}
Based on the $\mathbf{C}, \mathbf{U}, \mathbf{R}$ matrices, we can determine that $row\_mask = [1, 0, 0, 0, 1, 1]^{\top}$, $\texttt{RM}_+=\{1, 5, 6\}$, the $1$-hop starved nodes are $V_1, V_5, V_6$, and the $2$-hop starved node is $V_5$.

\subsection*{A.3 Proof of Proposition 1}

\begin{proof}
Recall that $\widehat{\rho}_{+}
= \frac{\sum_{i \in \texttt{RM}_+} \alpha\widehat{\mathbf{U}}_{ij}}{\sum_i \widetilde{\mathbf{A}}_{ij}}$ and $\rho_{+} = \frac{\sum_{i \in \texttt{RM}_+} \alpha\widetilde{\mathbf{U}}_{ij}}{\sum_i \widetilde{\mathbf{A}}_{ij}}$.
For $\forall j\in {\texttt{CM}_+}$, if we randomly select $\tau$ out of $c$ labeled nodes as the supplementary adjacent points for each $1$-hop starved node $V_i$, then the probability of $\widehat{\mathbf{U}}_{ij} > 0$ is $\frac{\tau}{c}$. 
Considering the $j$-th column ($j\in {\texttt{CM}_+}$), if $\sum_{i \in \texttt{RM}_+} \alpha \widehat{\mathbf{U}}_{ij} = \sum_{i \in \texttt{RM}_+} \alpha \widetilde{\mathbf{U}}_{ij}$ (\textit{i.e.,} $\widehat{\rho}_{+}=\rho_{+}$ ), it means that all the $1$-hop starved nodes $V_i$ (for $ i \in \texttt{RM}_+$) select $V_j$ as the supplementary adjacent point. 
Since there are $r$ $1$-hop starved nodes, the probability of $\widehat{\rho}_{+}=\rho_{+}$ is $\left(\frac{\tau}{c}\right)^r$. Moreover, since $\tau < c$, we have $\widehat{\rho}_{+}
= \frac{\sum_{i \in \texttt{RM}_+} \alpha\widehat{\mathbf{U}}_{ij}}{\sum_i \widetilde{\mathbf{A}}_{ij}} \leq {\rho}_{+}$, where $\widehat{\rho}_{+}=\rho_{+}$ with only a probability of $\left(\frac{\tau}{c}\right)^r$. 
\end{proof}


\subsection*{A.4 Proof of Proposition 2}

\begin{proof}
Recall that $\widehat{\rho}_{-} = \frac{\sum_{i \notin \texttt{RM}_+} \widetilde{\mathbf{A}}_{ij}^{} + \alpha \sum_{i \notin \texttt{RM}_+} \mathbf{Q}_{ij}  }{\sum_i \widetilde{\mathbf{A}}_{ij}^{} + \alpha \sum_{i \notin \texttt{RM}_+} \mathbf{Q}_{ij} }$ and $\rho_{-} = \frac{\sum_{i \notin \texttt{RM}_+} \widetilde{\mathbf{A}}_{ij}^{}}{\sum_i \widetilde{\mathbf{A}}_{ij}^{}}$. 
For $\forall j\in {\texttt{CM}_+}$, if we randomly select $\tau$ out of $c$ labeled nodes as the supplementary adjacent points for each non-starved node, then the probability of $\mathbf{Q}_{ij}=0$ is $\left(1-\frac{\tau}{c}\right)$. 
Considering the $j$-th column ($j\in {\texttt{CM}_+}$), if $\alpha \sum_{i \notin \texttt{RM}_+} \mathbf{Q}_{ij}  = 0$ (\textit{i.e.,} $\widehat{\rho}_{-}=\rho_{-}$), it means that all the non-starved nodes $V_i$ (for $ i \notin \texttt{RM}_+$) do not select $V_j$ as the supplementary adjacent point. 
Since there are $r$ $1$-hop starved nodes, the probability of $\widehat{\rho}_{-}=\rho_{-}$ is $\left(1-\frac{\tau}{c}\right)^{n-r}$. Moreover, since $\sum_i \widetilde{\mathbf{A}}_{ij}^{} > \sum_{i \notin \texttt{RM}_+} \widetilde{\mathbf{A}}_{ij}^{}$ ($|\texttt{RM}_+|=r$), we have $\widehat{\rho}_{-} \geq \rho_{-}$, where $\widehat{\rho}_{-}=\rho_{-}$ with only a probability of $\left(1-\frac{\tau}{c}\right)^{n-r}$.  
\end{proof}


\section*{B Experimental Settings}

\subsection*{B.1 Dataset Description}
Table \ref{tab1} provides a comprehensive overview of the datasets used in our experiments, including various statistical characteristics. 
Please refer to the table for detailed information regarding the datasets. Note that, the original edge features are not used in the experiments as our goal is to infer a latent graph from the node features of the datasets.

\begin{table}[bt]  
	\centering
	\setlength{\tabcolsep}{3.5mm}
	\caption{Detailed description of the datasets used in experiments. }
	\begin{tabular}{l|rrrrr}
		\hline
		Dataset / Statistic           
		& Nodes  & Edges  & Features  & Classes   & Labeling Rate    \\
		\hline
		
		ogbn-arxiv 
		& 169,343  & 1,166,243  & 128  & 40  & 53.70\%   \\
		
		Cora140 
		& 2,708  & 5,429  & 1,433  & 7   & 5.17\%    \\
		
		Cora390     
		& 2,708  & 5,429  & 1,433  & 7   & 14.40\%   \\
		
		Citeseer120  
		& 3,327  & 4,732  & 3,703  & 6  & 3.61\%   \\
		
		Citeseer370
		& 3,327  & 4,732  & 3,703  & 6  & 11.12\%   \\
		
		Pubmed
		& 19,717  & 44,338 & 500  & 3  & 0.30\%   \\
		\hline
	\end{tabular}
	\label{tab1}
\end{table}

\subsection*{B.2 Baselines}

In our experiments, we utilize publicly available code repositories provided by the respective authors and follow the hyperparameter settings outlined in their papers \cite{kipficlr, IDGL, GRCN, SLAPS, LCGS}. 
We rerun the codes for all the methods and record the best testing accuracy. 
Below, we summarize the implementation details and provide the code links for each of the used methods:

\textbf{GCN+KNN}~\cite{kipficlr}: \href{https://github.com/tkipf/pygcn}{https://github.com/tkipf/pygcn}.

\textbf{GCN\&KNN}~\cite{kipficlr}: \href{https://github.com/tkipf/pygcn}{https://github.com/tkipf/pygcn}. 

For GCN\&KNN, we implement the codes by ourselves. For simplicity, we adopt the following Dirichlet energy~\cite{IDGL} as the graph regularization loss:
\begin{equation}
	\mathcal{L}_{\operatorname{reg}} = \frac{1}{2n^2}\sum_{i=1}^{n}\sum_{j=1}^{n} \widehat{\mathbf{A}}_{ij} ||\mathbf{X}_{i:} - \mathbf{X}_{j:}||^2. \nonumber
\end{equation}

\textbf{IDGL}~\cite{IDGL}: \href{https://github.com/hugochan/IDGL}{https://github.com/hugochan/IDGL}. 

\textbf{GRCN}~\cite{GRCN}: \href{https://github.com/PlusRoss/GRCN}{https://github.com/PlusRoss/GRCN}. 

\textbf{SLAPS}~\cite{SLAPS}: \href{https://github.com/BorealisAI/SLAPS-GNN}{https://github.com/BorealisAI/SLAPS-GNN}. 

Note that, SLAPS is a multi-task-based approach that involves training an additional self-supervision task for latent graph inference. To ensure computational efficiency, we set the maximum number of epochs to $1000$ for ogbn-arxiv dataset. We adopt the following denoising autoencoder loss as the graph regularization loss~\cite{SLAPS}:
\begin{equation}
\mathcal{L}_{\operatorname{reg}} = F(\mathbf{X}_{idx}, \texttt{GNN}_{\texttt{DAE}}( \mathbf{\widetilde X}, A; \theta_{\texttt{GNN}_{\texttt{DAE}}})_{idx}). \nonumber
\end{equation}
where $idx$ are the selected indices, and $F$ is the binary cross-entropy loss or the mean-squared error loss~\cite{SLAPS}.

\textbf{LCGS}~\cite{LCGS}: \href{https://github.com/hu-my/LCGS}{https://github.com/hu-my/LCGS}.

\section*{C Further Discussions}

\subsection*{C.1 Deeper GNNs}
\textit{Why using deeper GNNs is not an optimal solution?}
This is because using deeper will bring up new issues as illustrated below. 
\textbf{First}, although increasing GNN to $4$ layers reduces the number of starved nodes from near $3,000$ to near $500$ for the Citeseer120 dataset, the number of starved nodes still accounts for nearly $15\%$. In fact, the number of starved nodes depends on several factors, including the labeling rate of nodes, the graph sparsification process, and so on. \textit{How to select the number of layers of GNNs to reduce the starved nodes for different datasets and different LGI methods is not easy}. 
\textbf{Second}, deeper GNNs typically provide inferior performance.
To allow a $k$-hop starved node to receive information from labeled nodes, the GNNs need to have at least $k+1$ layers. 
However, as the number of layers $k$ increases, the GNNs may still fail to propagate supervision information from labeled nodes to $k$-hop starved nodes due to the over-squashing issue~\cite{Alon2021}.
Moreover, deeper GNNs will make the oversmoothing problem~\cite{Li2018, SLAPS} be even more severe. 
Empirically, we test the SLAPS method on the Pubmed dataset using a different number of layers, and the corresponding accuracies can be seen in Table \ref{tab4}. 
As we can see, the accuracy of SLAPS with a $8$-layer GNN is extremely low, demonstrating that \textit{the generalization of GNNs cannot be guaranteed by simply increasing the number of layers}.
\textbf{Moreover}, deeper GNNs make the model more complexity.
Table \ref{tab4} lists the number of parameters and the float point operations (FLOPs) of SLAPS method with $2$, $4$, and $8$-layer GNNs. 
We observe that \textit{as the number of layers increases, both the number of parameters and the FLOPs increase}. 
In fact, the majority of existing LGI methods typically only adopt $2$-layer GNNs for effectiveness and efficiency.

\begin{table}[bt]  
	\centering
	\setlength{\tabcolsep}{5.5mm}
	\caption{Accuracy, parameters and FLOPs of SLAPS when using a different number of GNN layers. }
	\begin{tabular}{c|r|r|r}
		\hline
		Layers
		& Accuracy   & Parameters  & FLOPs    \\
		\hline
		$2$           
		& 74.86 $\pm$ 0.79  & 645.76K  &  12.70G  \\
		\hline
		$4$ 
		& 70.24 $\pm$ 1.58	& 680.90K	& 13.39G	   \\
		\hline
		$8$
		& 41.18 $\pm$ 0.88	& 751.17K	& 14.76G      \\
		\hline
	\end{tabular}
	\label{tab4}
\end{table}

\subsection*{C.2 Degree Distribution}
\textit{Why introducing connections between labeled and unlabeled nodes helps the models?}
We provide a basic insight to answer this question. 
As we know, an unlabeled node must belong to a specific class, and in the node classification task, a specific class must have some labeled nodes. 
As a result, we can derive the prior assumption that a good classification model will make an unlabeled node closer to some labeled nodes of the same class. 
That is why we select the closest labeled nodes as the supplementary adjacent points for the starved nodes. 
If we compulsorily add such connections between the unlabeled node and its closest labeled nodes, the prior information will be exploited during training, helping the models work better. 
It is worth noting that introducing connections between labeled and unlabeled nodes can lead to a substantial rise in the degrees of labeled nodes. 
A potential question is that whether altering the distribution of node degrees will have an adverse effect on the model. 
To be honest, we have not yet observed any adverse effects from the current experimental results. 
Nevertheless, it is still worth exploring whether the divergence in degree distributions will potentially hinder the model’s performance. 
What the exact relationship is between the degree distributions and the model’s performance in latent graph inference task is a difficult and open question that deserves in-depth exploration. 

\subsection*{C.3 Definition of Regularization Graph}
\textit{Why selecting $k=1$ to define the regularization graph between $k$-hop starved nodes and labeled ones?}
In fact, we can set any value for $k$ to define the regularization graph. 
However, if we set $k > 1$, there will be a potential issue. 
For illustration, let us set $k=3$ and eliminate the $3$-hop starved nodes by adding the regularization graph. 
In this case, $1$-hop and $2$-hop starved nodes will still exist since only $3$-hop starved nodes are removed. 
To address this, a simple solution is to directly set $k = 1$ to define the regularization graph. 
Why only setting $k = 1$ works? 
According to the definition of starved nodes, we know that a $k$-hop starved node also qualifies as a $(k-1)$-hop starved node. 
Therefore, if we eliminate $1$-hop starved nodes then all $m$-hop starved nodes for $m > 1$ will be eliminated. 
Due to the effectiveness and simplicity, we therefore select $k = 1$.

\begin{table}[bt]  
	\centering
	\setlength{\tabcolsep}{2.5mm}
	\caption{Test accuracy (\%) of different models corresponding to the best validation performance. }
	\begin{tabular}{lr|lr|lr}
		\hline
		GCN+KNN	& 67.28 $\pm$ 0.38	& GRCN	& 68.62 $\pm$ 0.19	& SLAPS	& 74.50 $\pm$ 0.33  \\
		GCN+KNN\_U	& 72.76 $\pm$ 0.29	& GRCN\_U	& 72.30 $\pm$ 0.95	& SLAPS\_U	& 76.00 $\pm$ 0.41   \\
		GCN+KNN\_R	& 72.98 $\pm$ 0.31	& GRCN\_R	& 72.14 $\pm$ 0.79	& SLAPS\_R	& 76.86 $\pm$ 0.83  \\
		\hline
	\end{tabular}
	\label{tab5}
\end{table}

\subsection*{C.4 Generalization Performance}
To show the results of the models corresponding to the best validation performance, we conduct more experiments on the Pubmed dataset, and the experimental results are listed in Table \ref{tab5}. As we can see, our methods still improve the baselines, which further indicates their effectiveness. 

\section*{D Limitation}
According to Theorem 1, we can identify $k$-hop starved nodes for $k \geq 1$. 
Additionally, based on Theorem 2 and CUR matrix decomposition, we demonstrate the identification of $k$-hop starved nodes for $k\in\{1, 2\}$. 
How to identify $k$-hop starved nodes for $k>2$ based on CUR decomposition remains an interesting direction for further investigation. 
Nevertheless, it is worth noting that by eliminating all $\kappa$-hop starved nodes, there will be no $k$-hop starved nodes for $\forall k > \kappa$. 
Consequently, we think that only identifying $k$-hop starved nodes for $k\in\{1, 2\}$ is sufficient and efficient to address the problem of supervision starvation.




\end{document}